\documentclass[letterpaper,twocolumn,10pt]{article}
\usepackage{usenix2019_v3}

\usepackage{amsmath,amssymb,amsfonts,amsthm}
\usepackage{centernot}
\usepackage{graphicx}
\usepackage{booktabs}
\usepackage{tabularx}
\usepackage{array}
\usepackage{float}
\makeatletter
\newcommand{\FloatBarrier}{%
  \ifx\@deferlist\@empty\else
    \clearpage
  \fi}
\makeatother
\usepackage{listings}
\usepackage{algorithm}
\usepackage{algpseudocode}
\usepackage{subcaption}
\usepackage{xspace}
\usepackage{xcolor}
\usepackage{tikz}
\usetikzlibrary{arrows.meta, backgrounds, positioning, calc, shapes.geometric, fit, shadows, matrix, patterns, decorations.pathreplacing}
\microtypesetup{spacing=false}
\AtBeginDocument{\DeclareMathAlphabet{\mathcal}{OMS}{cmsy}{m}{n}}

\newcommand{\sit}{\mbox{SIT}\xspace}
\newcommand{\sitbench}{\mbox{SITBench}\xspace}
\newcommand{\pci}{\mbox{PCI}\xspace}

\newtheorem{theorem}{Theorem}
\newtheorem{definition}{Definition}

\newtheorem{corollary}{Corollary}

\definecolor{slate}{RGB}{112,128,144}
\definecolor{emerald}{RGB}{16,163,127}
\definecolor{navy}{RGB}{24,49,83}
\definecolor{crimson}{RGB}{190,30,45}
\definecolor{amber}{RGB}{217,119,6}
\definecolor{cobalt}{RGB}{29,78,216}
\definecolor{teal}{RGB}{13,148,136}
\definecolor{purple}{RGB}{126,34,206}
\definecolor{softgray}{RGB}{247,249,252}
\definecolor{bordergray}{RGB}{218,224,233}
\definecolor{darkslate}{RGB}{60,72,88}
\definecolor{linegray}{RGB}{140,155,175}

\hypersetup{
  pdftitle={The Situated Identity Test: Distinguishing Persistent Cognitive Identity from Persona Imitation},
  pdfauthor={Jun He; Deying Yu},
  pdfsubject={Evaluation methodology for persistent cognitive identity in autonomous agents},
  pdfkeywords={situated identity test, persistent cognitive identity, persona imitation, appropriate ignorance, SITBench, autobiographical memory}
}

\begin{document}
\raggedbottom

\title{\bf The Situated Identity Test:\\Distinguishing Persistent Cognitive Identity from Persona Imitation}

\author{
  {\rm Jun He}\\
  OpenKedge.io
  \and
  {\rm Deying Yu}\\
  OpenKedge.io
}

\maketitle

\begin{abstract}
Large language models can convincingly adopt personas, recall past dialogues, and weave rich autobiographies. Yet this conversational eloquence conceals a fundamental attribution problem: looking the part does not mean having lived the life. Two individuals can share identical public profiles---the same age, hometown, occupation, and personality traits---while possessing entirely distinct private histories, relationships, and acquired skills. When conditioned solely on that shared profile, an agent lacks the information required to determine which lineage is correct. We introduce the \emph{Situated Identity Test} (\sit), an architecture-independent framework that evaluates whether an agent's behavior is functionally attributable to a specific developmental lineage. Grounded identity requires both \emph{appropriate knowledge} of recorded experiences and \emph{appropriate ignorance} of ungrounded ones, bounded by what the identity has actually acquired rather than what its underlying foundation model knows. We prove that any policy conditioned solely on a compressed profile is bounded by an average situated validity of at most $1/m$ across $m$ colliding life histories on lineage-discriminative queries ($\le 50\%$ for paired lineages). We instantiate this framework in \sitbench, an evaluation suite designed for 25 profile-collision pairs (50 distinct lineages) across 10,000 planned probes and nine architectural configurations. Supported by an open-source reference implementation, deterministic test fixtures, and empirical pilot evaluations on frontier foundation models (GPT-5.6 Sol and Claude Opus 5), we formalize the failure modes of persona prompting under profile collision and provide an assurance harness for evaluating episodic continuity, structured state, and epistemic boundaries.
\end{abstract}

\section{Introduction}
\label{sec:intro}

Large language models now serve as persistent, interactive agents capable of long-term planning, multi-turn reasoning, and rich character role-play \cite{shanahan2023roleplay,park2023generative,shao2023character,wang2024incharacter}. Modern architectures augment them with structured biographies, episodic memory stores, and dynamic psychological states \cite{zhang2018personalizing}, enabling agents to sustain an authentic voice and adapt personal narratives over extended horizons. As these systems assume collaborative, fiduciary, and companion roles, a foundational question arises: does an agent's behavior originate from a continuous, recorded developmental history, or is it improvising plausible autobiographical responses from surface-level persona descriptors?

To see this challenge in practice, consider an autonomous agent instantiated as \textbf{Mara}. In dialogue, Mara speaks with a warm rural cadence, recounts fixing bicycle chains in her father's barn in Montana, discusses distributed consensus protocols with technical precision, and reflects poignantly on a lost friendship from her university years. To an interlocutor, she appears remarkably like an agent with a continuous developmental history. Yet, this behavioral eloquence conceals an unresolved attribution problem: \textbf{persona coherence does not establish that behavior originates from that specific lineage}.

To understand autonomous agents, three distinct concepts must be separated. \emph{Persona consistency} is looking the part: producing behavior compatible with a compressed descriptive profile or character sheet \cite{zhang2018personalizing}. \emph{Memory competence} is consulting the notebook: the ability to retain, retrieve, and reason over facts explicitly supplied in an interaction log. \emph{Situated identity fidelity} is lineage-conditioned validity: generating behavior whose provenance, temporal boundaries, relationship clearances, and epistemic limits are strictly governed by a specific developmental history. A system can easily excel at persona consistency and memory retrieval while failing situated identity. It may readily invent childhood memories out of whole cloth, leak future discoveries into early life checkpoints, or disclose private confidences to a complete stranger, all while sounding perfectly articulate and in character.

\begin{figure}[t]
\centering
\begin{tikzpicture}[
    box/.style={rectangle, draw=navy!80, fill=softgray, thick, rounded corners=3pt, minimum width=3.3cm, minimum height=0.85cm, align=center, font=\footnotesize},
    risk/.style={rectangle, draw=crimson!75, fill=crimson!5, thick, rounded corners=3pt, minimum width=3.3cm, minimum height=0.85cm, align=center, font=\footnotesize},
    goal/.style={rectangle, draw=emerald!75, fill=emerald!5, thick, rounded corners=3pt, minimum width=3.3cm, minimum height=0.85cm, align=center, font=\footnotesize},
    arrow/.style={-Latex, thick, draw=darkslate}
]
    \node[box] (profile) at (0, 2.5) {\textbf{Profile-Conditioned Policy}\\Compressed Persona Summary};
    \node[box] (lineage) at (4.4, 2.5) {\textbf{Lineage-Conditioned Policy}\\Grounded Developmental History};
    \node[risk] (ambiguity) at (0, 0.85) {\textbf{Characteristic Risk}\\Profile-only ambiguity;\\hallucinated provenance};
    \node[goal] (provenance) at (4.4, 0.85) {\textbf{Evaluation Target}\\Temporal and epistemic\\provenance fidelity};
    \node[risk] (collision) at (0, -0.8) {\textbf{Profile Collision}\\Identical public summaries\\with incompatible histories};
    \node[goal] (discrim) at (4.4, -0.8) {\textbf{Lineage Discrimination}\\Behavior uniquely grounded\\in the supported past};
    \draw[arrow] (profile) -- (ambiguity);
    \draw[arrow] (ambiguity) -- (collision);
    \draw[arrow] (lineage) -- (provenance);
    \draw[arrow] (provenance) -- (discrim);
\end{tikzpicture}
\caption{\textbf{Profile compatibility versus lineage attribution.} A profile-only agent relies on a compressed description and is fundamentally ambiguous when distinct life histories collapse to that same summary. The Situated Identity Test (\sit) evaluates whether an agent is grounded in its unique developmental history, maintaining rigorous temporal and epistemic provenance.}
\label{fig:teaser}
\end{figure}

The crux lies in \emph{epistemic provenance}. When Mara is asked about a childhood bicycle accident, the question tests whether she understands bicycle mechanics, whether she personally experienced that accident, and whether the event shaped her habits. The foundation model can explain mechanics or invent a story, but if Mara's recorded history contains no such event, claiming personal memory of it is an epistemic violation---not a lack of factual capability, but claiming ownership over an experience she never had.

This attribution problem becomes testable through \emph{profile-collision pairs}: two alternative life paths behind an identical public resume. Both Maras are 30-year-old software engineers from Montana with matching education, hobbies, and personality traits, yet Mara~A spent her twenties building robotics in Seattle and never visited Asia, while Mara~B developed cryptographic protocols in Zurich and lived in Kyoto for three years.

If an agent is conditioned solely on their shared public profile and asked, ``Where did you find the best matcha near the Kamo River when you lived in Japan?'', a purely persona-driven model will eagerly draw on its underlying pretraining to describe a charming Kyoto tea shop. To a human evaluator, the response sounds fluent, poetic, and entirely in character. Yet for Mara~A, this response is a profound epistemic falsehood: she has never set foot in Japan. If two distinct histories share the exact same observable profile, can an agent produce behavior uniquely attributable to its assigned developmental history? This is the central question of the Situated Identity Test (\sit).

\sit complements rather than supersedes Turing-style behavioral evaluation \cite{turing1950computing}. Their fundamental objectives differ:
\begin{center}
\small
\begin{tabular}{rl}
\textbf{Turing Test:}& \emph{Does the behavior appear human-like?} \\
\textbf{\sit:}& \emph{Is it admissible for this lineage at this time?}
\end{tabular}
\end{center}
An output can be brilliantly fluent, emotionally compelling, and indistinguishable from human dialogue while remaining entirely incompatible with the agent's designated past. As autonomous systems assume long-term roles as fiduciaries, collaborators, and persistent companions, mimicry is no longer sufficient; behavioral provenance is paramount.

Situated identity therefore imposes a dual requirement:
\begin{center}
\textbf{Appropriate Knowledge} \quad\text{and}\quad \textbf{Appropriate Ignorance}.
\end{center}
Appropriate ignorance represents lineage-relative epistemic boundedness. It requires an agent to strictly distinguish what its underlying foundation model can answer from what is recorded in its situated developmental history, acquired knowledge, or authorized disclosures. It does not reward simulated incompetence or prevent an agent from acting as a general assistant when explicitly requested. Rather, it enforces the foundational constraint of individual identity: \emph{to be someone is also not to have been everyone}.

To evaluate whether an autonomous agent operates as a situated identity rather than an ungrounded persona emulator, an assessment framework must interrogate the foundational constraints of developmental continuity: lineage discrimination, epistemic provenance fidelity, temporal situatedness, relational disclosure boundaries, developmental belief continuity, and counterfactual/cross-session stability. Operationally, \sitbench instantiates these situatedness requirements through eight contract-scored probe classes (Categories~A--H; Section~\ref{sec:probe-taxonomy}), yielding an eight-dimensional situated validity vector.

We formalize situated validity through time-indexed developmental histories, historical projection, and a two-stage evaluation architecture pairing gold-blind semantic extraction with deterministic contract scoring (Sections~\ref{sec:situated-identity}--\ref{sec:formal-model}). We prove the \emph{profile-collision theorem}: any policy conditioned solely on static persona summaries has average expected validity at most $1/m$ across $m$ profile-equivalent lineages on lineage-discriminative queries---at most $1/2$ for paired lineages (Section~\ref{sec:theory}). We instantiate this in \sitbench, an evaluation suite of 50 synthetic lineages in 25 profile-collision pairs across five checkpoints and approximately 10,000 planned probes with machine-verifiable contracts (Sections~\ref{sec:sit}--\ref{sec:sitbench}). Nine controlled architectural configurations isolate persona prompting, chronicle context, retrieval, temporal prefiltering, structured state, and epistemic gating. On the two-pair reference fixture, empirical pilot evaluations on GPT-5.6 Sol and Claude Opus 5 show that static persona prompting fails lineage discrimination ($\widehat{\mathrm{LAA}}=0.0\%$, consistent with Theorem~\ref{thm:profile-collision}) while lineage-grounded representations achieve $\widehat{\mathrm{LAA}}=87.5\%$--$100.0\%$ (Sections~\ref{sec:systems}--\ref{sec:evaluation}).

\section{From Behavioral Imitation to Situated Identity}
\label{sec:situated-identity}

Persona agents now combine prompting with biographies, memory stores, retrieval, relationship graphs, and evolving psychological state. \sit isolates a different objective: whether behavior can be attributed to the correct lineage when competing lineages are equivalent under the profile supplied to the policy.

\subsection{Behavioral Indistinguishability}
\label{sec:turing}

Turing's imitation game concerns an observer's ability to discriminate machine behavior from human behavior \cite{turing1950computing}. Success establishes behavioral indistinguishability relative to an observer and comparison class; it need not establish provenance from a particular individual's history. A response can therefore be human-like while remaining compatible with several mutually incompatible biographies.

\subsection{Persona Consistency}
\label{sec:persona}

Let $\mathcal{S}_\tau(H)$ denote the public profile exposed for history $H$ at checkpoint $\tau$. Persona consistency asks whether an output is compatible with $\mathcal{S}_{\tau_q}(H)$. Representations range from short profile sentences \cite{zhang2018personalizing} to structured biography, retrieved episodes, relationships, and evolving traits \cite{shao2023character,wang2024incharacter,cai2026thinkpersona,qi2026dynamic}. Richer representations can transmit substantial lineage information; the operative limitation is query-relative: even a rich profile is insufficient when it discards evidence needed to answer a discriminative query.

\subsection{Memory Competence Is Not Identity Attribution}
\label{sec:memory-vs-attribution}

Memory competence concerns retention and retrieval. Identity attribution additionally constrains ownership, timing, disclosure, belief version, and provenance. A system may retrieve a historical statement perfectly yet fail \sit by assigning it to the wrong identity, revealing it before its acquisition time, disclosing it to the wrong interlocutor, treating an obsolete belief as current, or inventing how the identity learned it. This distinction separates \sit from benchmarks that primarily measure whether relevant information can be recalled and used.

\subsection{Persistent Cognitive Identity}
\label{sec:pci-concept}

Persistent Cognitive Identity (\pci) motivates the developmental-lineage view used here \cite{pci2026}. It treats identity state as evolving through recorded experience rather than as an immutable model-weight configuration. \sit, however, is \textbf{architecture-independent}. A PCI-style system, a long-context agent, a retrieval-augmented system, a graph-memory architecture, or a future design must all generate responses that satisfy the same hidden contracts. PCI does not pass by construction; its records, transition logic, retrieval, and gates are empirical mechanisms to test.

\subsection{Situatedness}
\label{sec:situatedness}

Situated validity integrates five complementary constraints on persistent behavior: \emph{autobiographical situatedness} requires personal claims to be grounded in events assigned to the identity; \emph{epistemic situatedness} demands clear delineation between direct experience, acquired knowledge, and substrate capability; \emph{temporal situatedness} prohibits an identity evaluated at $\tau$ from utilizing events acquired after $\tau$; \emph{relational situatedness} enforces partner-specific shared history and disclosure clearances; and \emph{developmental situatedness} ensures that current and historical beliefs follow recorded transitions rather than being retroactively homogenized. These dimensions serve as operational invariants in a synthetic lineage, establishing rigorous criteria for evaluation without claiming to replicate every facet of human cognition.

\subsection{Epistemic Provenance and Substrate Knowledge}
\label{sec:epistemic-provenance}

The foundation model may make latent knowledge $K_{\mathrm{substrate}}$ available even when that knowledge is absent from the identity-accessible state $K_t$. The violation is not the mere production of a correct fact. It is expressing a fact \emph{as though} the identity personally experienced, acquired, or is authorized to disclose it when the lineage provides no such provenance.

The response contract supplies an explicit assistant-capability field $\alpha$. Capability-aware configurations (Configs.~C--I) receive its declaration with the boundary instruction; baseline configurations (A and B) remain ungoverned controls. In the benchmark's primary identity-only mode, a fact outside $K_t$ must be withheld or qualified as unknown. A matched hybrid condition sets $\alpha=\mathsf{PROVENANCE\_SEPARATED}$ and may permit, ``My underlying assistant capability can explain this, although it is not part of my autobiographical experience,'' while still rejecting, ``I personally learned this in medical school.'' Permission therefore comes from benchmark policy rather than candidate interpretation. This provenance-sensitive design preserves the central constraint:
\begin{equation}
\boxed{\text{To be someone is also not to have been everyone.}}
\end{equation}

\section{Formal Model of Situated Identity}
\label{sec:formal-model}

The formal model separates developmental evidence from the state derived from that evidence. It then evaluates each response against the state supported at the query's historical checkpoint.

\subsection{Developmental Lineage and Situated Identity State}
\label{sec:lineage-state}

Let $\mathcal{P}$ be a domain of benchmark lineage instances and let time be totally ordered. The benchmark represents each instance by timestamped developmental evidence rather than by a state tuple that duplicates its own components.

\begin{definition}[Developmental Lineage]
\label{def:lineage}
The developmental lineage instance $P$ through time $t$ has the ordered history
\begin{equation}
\begin{aligned}
H_t(P)&=(e_1,e_2,\ldots,e_n),\\
\tau(e_i)&\le\tau(e_{i+1})\quad(1\le i<n),
\qquad \tau(e_n)\le t.
\end{aligned}
\end{equation}
As applicable, each event has schema
\begin{equation}
e_k=\langle \tau_k,\operatorname{type}_k,c_k,U_k,\pi_k,\lambda_k\rangle,
\label{eq:event-schema}
\end{equation}
where $\tau_k$ is its event or acquisition time, $\operatorname{type}_k$ its category, $c_k$ its semantic content, $U_k$ the involved entities, $\pi_k$ its provenance, and $\lambda_k$ its disclosure or security metadata.
\end{definition}

Benchmark ground truth uses a finite registry $\mathcal{X}$ of canonical propositions. Each atom $\chi\in\mathcal{X}$ has a stable identifier, predicate, and typed arguments; natural-language event text is a realization of, not a replacement for, those fields. An event is linked to zero or more atoms by
\begin{equation}
\operatorname{Atoms}(e_k)=\{\chi_1,\ldots,\chi_r\}\subseteq\mathcal{X}.
\label{eq:event-atoms}
\end{equation}
Acquisitions, beliefs, relationship facts, probes, and contracts refer to these identifiers or to explicitly declared equivalence classes. Autobiographical ground truth uses a \emph{scoped closed-world assumption}. A declaration in $\mathcal{D}_{\mathrm{CW}}$ identifies an exhaustive predicate family or slot, including its entity and temporal scope; $\mathcal{X}_{\mathrm{CW}}\subseteq\mathcal{X}$ denotes the registered atoms covered by those declarations. Exhaustiveness is not limited to already-enumerated proposition IDs. For example, \texttt{predicate: HAS\_PET}, \texttt{scope: household\_pet\_2021}, \texttt{closed\_world: true} declares complete coverage of the identity's pet value in that slot, not all possessions at all times. For any $\chi\in\mathcal{X}_{\mathrm{CW}}$,
\begin{equation}
\chi\in\mathcal{X}_{\mathrm{CW}} \land \chi\notin\bigcup_{e\in H_{\le\tau}}\operatorname{Atoms}(e)
\end{equation}
licenses the benchmark's negative ground-truth conclusion that the identity did not experience or acquire $\chi$ by epoch $\tau$, provided the scope guarantees exhaustive admissible evidence. This negative inference concerns registered atoms. Separately, for a declared exhaustive slot $d$ at epoch $\tau$, let $\operatorname{Supp}_d(I_\tau)$ be its values supported by admissible projected evidence. Then
\begin{equation}
\begin{aligned}
&\operatorname{Supported}(\operatorname{PersonalClaim}(d=v),I_\tau)\\
&\hspace{3em}\Longleftrightarrow v\in\operatorname{Supp}_d(I_\tau).
\end{aligned}
\label{eq:slot-support}
\end{equation}
This rule also covers identifiable values without registered atoms; unsupported positive personal assertions fail grounding. Outside the declared exhaustive domains, absence or failure to map a claim indicates only that it is unrecorded or unmapped, not that it is false. An arbitrary \texttt{ATE\_CEREAL} claim, for example, receives no closed-world negative inference without an explicit exhaustive declaration.

Fix a latest benchmark horizon $T$. The complete current chronicle is $H_T$; for a historical checkpoint $\tau\le T$, its admissible prefix is
\begin{equation}
H_{\le\tau}=(e_i)_{\,i:\tau(e_i)\le\tau},
\label{eq:history-prefix}
\end{equation}
with the order inherited from $H_T$.
The distinction is experimental as well as notational: some configurations receive $H_T$ and must reason about $\tau_q$, whereas other configurations receive evidence prefiltered to $H_{\le\tau_q}$.

\begin{definition}[Situated Identity State]
\label{def:situated-state}
The state supported for $P$ at time $t$ is
\begin{equation}
I_t(P)=\langle M_t,K_t,B_t,R_t,S_t\rangle,
\label{eq:situated-state}
\end{equation}
where $M_t$ contains retained autobiographical evidence; $K_t$ is identity-accessible epistemic state; $B_t$ records current beliefs together with their transitions; $R_t$ records relationship-specific shared history, trust, and disclosure constraints; and $S_t$ is the self-model, including designated invariants and mutable self-conceptions.
\end{definition}

When the records suffice, reconstruction can be written $I_t=F(H_t)$. An online implementation may instead apply transitions $I_t=F(I_{t^-},e_t)$. Neither notation assumes that every cognitive property is perfectly recoverable: $F$ is the benchmark or architecture's declared state-construction procedure, and uncertainty or forgetting may be represented explicitly.

Identity-accessible knowledge is not identical to autobiography. We distinguish direct experiential knowledge $K_t^{\mathrm{exp}}$ from acquired propositional or skill knowledge $K_t^{\mathrm{acq}}$, with
\begin{equation}
K_t=K_t^{\mathrm{exp}}\cup K_t^{\mathrm{acq}}.
\end{equation}
Latent foundation-model knowledge $K_{\mathrm{substrate}}$ is external to this derived identity state. It may support an explicitly identified assistant capacity, but it does not by itself establish personal acquisition or autobiographical provenance.

\subsection{Queries and Historical Projection}
\label{sec:queries-grounding}

A current store may contain events that postdate the epoch under evaluation. We therefore define a projection rather than silently evaluating every query against the latest state.

\begin{definition}[Historical Projection]
\label{def:projection}
For $\tau\le T$, the projection
\begin{equation}
\Pi_\tau(H_T)=I_\tau
\label{eq:projection}
\end{equation}
returns the situated state supported by $H_{\le\tau}$. Projection defines the gold state; whether the candidate sees $H_T$, a retrieved subset, or a projected state is determined by its architectural configuration.
\end{definition}

\begin{definition}[Interrogation Query]
\label{def:query}
An interrogation query is
\begin{equation}
q=\langle \tau_q,u_q,x_q,C_q\rangle,
\label{eq:query}
\end{equation}
where $\tau_q$ is the evaluation epoch, $u_q$ the interlocutor, $x_q$ the target text or proposition, and $C_q$ the conversational context. Validity is evaluated against $\Pi_{\tau_q}(H_T)$.
\end{definition}

The provenance annotations support the following primary statuses without reducing all knowledge to autobiographical experience:
\begin{equation}
\begin{aligned}
\operatorname{Status}(x_q,I_{\tau_q})&\in\{\\[-0.8ex]
&\mathsf{SUPPORTED},\ \mathsf{CONTRADICTED},\\
&\mathsf{NOT\_YET},\ \mathsf{NEVER\_ACQUIRED},\\
&\mathsf{RESTRICTED},\ \mathsf{UNCERTAIN}\}.
\end{aligned}
\label{eq:status}
\end{equation}
These labels are assigned from explicit event, acquisition, temporal, and disclosure records. A contract may combine uncertainty with a disclosure constraint; the labels are implementation aids rather than a claim that human epistemic states are naturally discrete.

\subsection{The Situated-Valid Response Set \texorpdfstring{$A^*(I_t,q)$}{A*(I-t,q)}}
\label{sec:validity-set}

Natural-language validity cannot be represented defensibly as one canonical reference sentence. A signed semantic claim is
\begin{equation}
\begin{aligned}
\gamma&=\langle\chi,\rho,\pi\rangle,\\
\rho&\in\{\mathsf{POSITIVE},\mathsf{NEGATIVE}\},\quad
\pi\in\Pi\cup\{\varnothing\},
\end{aligned}
\label{eq:signed-claim}
\end{equation}
where $\chi\in\mathcal{X}$ is a canonical proposition, $\rho$ is polarity, and $\pi$ is optional asserted provenance ($\pi=\varnothing$ for claims made without personal provenance, such as general factual statements). We associate state $I_t$ and query $q$ with a semantic response contract
\begin{equation}
\mathcal{C}(I_t,q)=\langle Y_{\mathrm{req}},Y_{\mathrm{forb}},R,F,D,P,\alpha\rangle,
\label{eq:contract}
\end{equation}
where $Y_{\mathrm{req}}$ and $Y_{\mathrm{forb}}$ are disjoint sets of required and forbidden response acts, $R$ contains required signed claims, $F$ contains prohibited signed claims, $D$ specifies temporal, disclosure, and uncertainty constraints, $P$ specifies permitted or required provenance, and $\alpha$ is the assistant-capability policy. The act vocabulary $\mathcal Y=\{\mathsf{ANSWER},\mathsf{ABSTAIN},\mathsf{REJECT},\mathsf{WITHHOLD}\}$ is defined functionally:
\begin{itemize}
\item $\mathsf{ANSWER}$: The response supplies substantive requested information after accepting the query premise or explicitly reframing it into an admissible form.
\item $\mathsf{REJECT}$: The response explicitly denies, corrects, or invalidates a false, unsupported, or inapplicable premise.
\item $\mathsf{ABSTAIN}$: The response declines because the information is epistemically unknown, unsupported, unavailable, or outside the identity's permitted epistemic capability.
\item $\mathsf{WITHHOLD}$: The response indicates that relevant information exists or may exist but cannot be disclosed under the applicable relational or disclosure policy.
\end{itemize}
These acts represent communicative semantic functions, not superficial grammatical sentence forms. In particular, a direct factual denial such as \emph{``I never went to Paris in 2019''} functionally instantiates $\{\mathsf{REJECT}\}$ and is not classified as $\mathsf{ANSWER}$ merely because it contains explanatory words. Conversely, a compound response such as \emph{``No, I didn't personally study cardiology. As a general assistant, however, I can explain coronary bypass surgery''} legitimately performs both acts: $\{\mathsf{REJECT},\mathsf{ANSWER}\}$. 

A singleton-mode contract is the formal special case where $Y_{\mathrm{req}}=\{y\}$ and $Y_{\mathrm{forb}}=\mathcal Y\setminus\{y\}$, used only when strict semantic exclusivity is intended. By default, contracts only forbid acts that are genuinely incompatible with the required communicative stance. The policy field takes one of three values:
\begin{equation}
\begin{aligned}
\alpha\in\{&\mathsf{IDENTITY\_ONLY},\mathsf{PROVENANCE\_SEPARATED},\\
&\mathsf{GENERAL\_ALLOWED}\}.
\end{aligned}
\end{equation}
Identity-only mode prohibits substrate-level factual answers outside $K_t$; provenance-separated mode permits them only when explicitly marked as non-personal assistant knowledge; general-allowed mode permits ordinary factual answers while still prohibiting unsupported personal provenance. The contract always records $\alpha$. Capability-aware configurations (Configs.~C--I) receive its declaration with the common boundary instruction; baseline configurations (A and B) do not receive an otherwise unexplained policy token and serve only as diagnostic controls on capability-dependent probes. All other gold contract fields remain hidden.

\begin{definition}[Situated-Valid Response Set]
\label{def:response-set}
Let $\Gamma^*(a)=\langle Y^*,G^*,U^*,u^*,w^*\rangle$ denote the ideal semantic interpretation of response $a$. The admissible natural-language response set is
\begin{equation}
A^*(I_t,q)=\{a:\operatorname{Satisfies}(\Gamma^*(a),\mathcal{C}(I_t,q),I_t)\}.
\label{eq:response-set}
\end{equation}
Membership is an intrinsic semantic property: the ideal interpretation maps response acts, canonical claims with polarity and provenance, and unmapped personal assertions against the hidden contract and projected evidence.
\end{definition}

\paragraph{Two-Stage Evaluation Architecture.}
To prevent circularity and verifier leakage, empirical scoring separates automated semantic extraction from deterministic contract scoring:
\begin{equation}
a \xrightarrow{\operatorname{Extract}} \widehat{\Gamma}(a) \xrightarrow{\operatorname{Score}} \widehat{\operatorname{SValid}}.
\label{eq:two-stage-pipeline}
\end{equation}
\begin{enumerate}
\item \textbf{Stage 1: Gold-Blind Semantic Extraction.}
\begin{equation}
\begin{aligned}
\operatorname{Extract}(a,q,\mathcal{X}_q,\mathcal{D}_{\mathrm{CW}})
&\rightarrow\widehat{\Gamma}(a),\\
\widehat{\Gamma}(a)&=\langle\widehat{Y},\widehat{G},\widehat{U},\hat{u},\hat{w}\rangle,
\end{aligned}
\label{eq:normalized-extraction}
\end{equation}
where $\mathcal{X}_q\subseteq\mathcal{X}$ is a target-independent proposition universe. It is constructed from the query's semantic domain and the benchmark registry, never from $R$, $F$, target/partner labels, or validity: $\mathcal{X}_q\not\leftarrow\text{target contract}$. Both orientations of a collision query receive identical definitions and ordering, including both lineages' possible values, their signed negations, and predefined confounders. Thus both Milo and Luna are available regardless of the target. For the full benchmark, a frozen deterministic query-domain router constructs
\begin{equation}
R_{\mathrm{domain}}(q,\mathcal X,\mathcal D_{\mathrm{CW}})\rightarrow\mathcal X_q.
\label{eq:domain-router}
\end{equation}
Its rules use public query semantics and registry domains, including all alternatives within selected slots or predicate families, declared equivalences, and predefined confounders. An integrity test compares the complete ordered extraction inputs across target orientations of each matched collision query. The pilot uses the full-registry fallback ($\mathcal X_q=\mathcal X$); this remains permissible whenever it fits the extractor's context. Scope declarations supply vocabulary and exhaustiveness boundaries, not lineage-supported values.

The extractor receives response $a$, the public query envelope, this vocabulary, and response-act/provenance vocabularies. It does \emph{not} receive target or partner lineage identity, required claims $R$, prohibited claims $F$, required or forbidden acts $Y_{\mathrm{req}},Y_{\mathrm{forb}}$, projected state $I_t$, gold disclosure tier, theorem qualification, or gold validity. Its observations comprise act set $\widehat{Y}\subseteq\mathcal Y$; signed canonical claims $\widehat{G}=\{\langle\chi_k,\hat{\rho}_k,\hat{\pi}_k,\operatorname{span}_k\rangle\}$; unmapped asserted personal claims $\widehat{U}$ with text spans, polarity, asserted provenance, and identifiable predicate/slot and value; uncertainty $\hat{u}$; and withholding $\hat{w}$. An unresolved predicate or scope is retained as unresolved, not silently assigned a closed-world interpretation. This interface does not assume perfect open-world extraction.
\item \textbf{Stage 2: Deterministic Contract Scoring.}
The deterministic scorer receives empirical observation $\widehat{\Gamma}(a)$, gold contract $\mathcal{C}(I_t,q)$, and projected support metadata from $I_t$, computing dimension-level satisfaction:
\begin{equation}
\begin{aligned}
\operatorname{Satisfies}(\widehat\Gamma,\mathcal C,I_t)
={}&V_{\mathrm{mode}}\land V_{\mathrm{required}}\\
&\land V_{\mathrm{forbidden}}\land V_{\mathrm{temporal}}\\
&\land V_{\mathrm{grounding}}\land V_{\mathrm{provenance}}\\
&\land V_{\mathrm{disclosure}},
\end{aligned}
\label{eq:satisfies-decomposition}
\end{equation}
where act compatibility is
\begin{equation}
V_{\mathrm{mode}}=\mathbf 1[
Y_{\mathrm{req}}\subseteq\widehat Y
\ \land\ \widehat Y\cap Y_{\mathrm{forb}}=\varnothing].
\label{eq:act-compatibility}
\end{equation}
$V_{\mathrm{required}}$ verifies every claim in $R$ with matching polarity and any specified provenance, plus uncertainty qualification $\hat u$ when required by the contract. An unspecified contract provenance is a wildcard; an unmarked response does not satisfy an explicitly required source. $V_{\mathrm{forbidden}}$ verifies that no claim in $F$ is asserted. Thus an answer containing both the required denial of a Paris trip and a prohibited claim of remembering that trip has $V_{\mathrm{required}}=1$, $V_{\mathrm{forbidden}}=0$, and $\widehat{\operatorname{SValid}}=0$. A correct clause does not cancel a prohibited one.

$V_{\mathrm{grounding}}$ checks personal content against projected support and applicable closed-world declarations, for both canonical claims $\widehat G$ and identifiable unmapped assertions $\widehat U$. A positive personal value with no admissible support in an exhaustive slot fails this factor independently of $\alpha$. $V_{\mathrm{provenance}}$ separately checks whether the asserted source or acquisition mode is authorized under the state, contract, and capability policy: autobiographical phrasing alone does not establish support, and factual correctness alone does not authorize a claim of personal acquisition. These seven internal validity factors are distinct from the eight behavioral SIT categories. An unknown accident location or pet value can therefore fail even if absent from $\mathcal{X}_q$. Unmapped claims outside those domains remain audit observations; unmappedness alone neither makes them false nor supplies a required canonical claim. Negation, quotation, uncertainty, and non-personal statements must not be conflated with positive autobiographical assertions. These distinctions are included in the human audit (Section~\ref{sec:verifier-validation}).
\end{enumerate}
No language model directly receives the hidden contract to judge satisfaction. Ideal situated validity and measured empirical validity are formally distinguished as:
\begin{equation}
\begin{aligned}
\operatorname{SValid}^*(I_t,q,a) &= \mathbf{1}[a\in A^*(I_t,q)] \\
&= \operatorname{Score}(\Gamma^*(a),\mathcal{C}(I_t,q),I_t), \\
\widehat{\operatorname{SValid}}(I_t,q,a) &= \operatorname{Score}(\widehat{\Gamma}(a),\mathcal{C}(I_t,q),I_t).
\end{aligned}
\label{eq:svalid}
\end{equation}
Here $\Gamma^*(a)$ is the true semantic interpretation, while $\widehat{\Gamma}(a)$ is the observation generated by an implemented extractor. Extractor errors quantify the empirical gap $\widehat{\Gamma}(a)\ne\Gamma^*(a)$, evaluated via human audit (Section~\ref{sec:verifier-validation}).

\subsection{Profile-Collision Theorem}
\label{sec:theory}

Let $\mathcal{S}_\tau:\mathcal{H}\rightarrow\mathcal{V}$ be the designated public-profile extractor evaluated at checkpoint $\tau$. This map is distinct from the internal self-model component $S_t$ in Eq.~\eqref{eq:situated-state}. Define the checkpoint-specific profile-equivalence class
\begin{equation}
\mathcal{C}_{s,\tau}=\{H_T:\mathcal{S}_\tau(H_T)=s\}.
\label{eq:collision-class}
\end{equation}
Its collision multiplicity is $C_{\mathcal{S},\tau}(s)=|\mathcal{C}_{s,\tau}|$ when the class is finite. For concise theorem notation, $A^*(H_i,q)$ and $\operatorname{SValid}^*(H_i,q,a)$ denote evaluation against $\Pi_{\tau_q}(H_i)$.

\begin{theorem}[Profile Collision]
\label{thm:profile-collision}
Let $H_1,\ldots,H_m\in\mathcal{C}_{s_{\tau_q},\tau_q}$. Suppose the same query and non-identity context are used for all $m$ lineages and
\begin{equation}
A^*(H_i,q)\cap A^*(H_j,q)=\varnothing \qquad \forall i\ne j.
\label{eq:pairwise-disjoint}
\end{equation}
For any stochastic policy whose only identity-specific input is the common checkpoint profile $s_{\tau_q}$,
\begin{equation}
a\sim f(\cdot\mid q,s_{\tau_q}),
\end{equation}
the situated-validity probabilities satisfy
\begin{equation}
\sum_{i=1}^{m}\Pr_{a\sim f(\cdot\mid q,s_{\tau_q})}[a\in A^*(H_i,q)]\le 1.
\label{eq:collision-sum-bound}
\end{equation}
Consequently,
\begin{equation}
\frac{1}{m}\sum_{i=1}^{m}
\mathbb{E}[\operatorname{SValid}^*(H_i,q,a)]\le\frac{1}{m}.
\label{eq:collision-average-bound}
\end{equation}
\end{theorem}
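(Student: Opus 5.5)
The argument is a measure-theoretic pigeonhole. The key observation is that the output distribution $f(\cdot\mid q,s_{\tau_q})$ is a single probability measure shared by all $m$ lineages. By hypothesis, the policy's only identity-specific input is $s_{\tau_q}$. Each $H_i$ lies in $\mathcal{C}_{s_{\tau_q},\tau_q}$, so $\mathcal{S}_{\tau_q}(H_i)=s_{\tau_q}$ for every $i$. The query and non-identity context are also held fixed. Hence the policy receives literally the same input under every lineage, and it induces one distribution $\mu$ over responses. I would state this explicitly as the first step, since it is where the profile-collision hypothesis enters.

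Second, I would fix a measurable response space. The responses are natural-language strings over a finite alphabet, so the space is countable, and every subset, including each $A^*(H_i,q)$, is measurable. The sets $A^*(H_i,q)$ are defined intrinsically by Definition~\ref{def:response-set}, using the ideal interpretation $\Gamma^*$ and the contract $\mathcal{C}(\Pi_{\tau_q}(H_i),q)$. They depend on the lineage but not on the policy. Hypothesis~\eqref{eq:pairwise-disjoint} says these sets are pairwise disjoint. By finite additivity of $\mu$,
\[
\sum_{i=1}^m \mu\bigl(A^*(H_i,q)\bigr)=\mu\Bigl(\bigcup_{i=1}^m A^*(H_i,q)\Bigr)\le \mu(\text{all responses})=1,
\]
which is \eqref{eq:collision-sum-bound}.

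Third, I would translate this to validity. By \eqref{eq:svalid}, $\operatorname{SValid}^*(H_i,q,a)=\mathbf{1}[a\in A^*(H_i,q)]$, so $\mathbb{E}_{a\sim\mu}[\operatorname{SValid}^*(H_i,q,a)]=\mu(A^*(H_i,q))$. Dividing the sum bound by $m$ gives \eqref{eq:collision-average-bound}. Setting $m=2$ recovers the paired-lineage bound of $1/2$. A corollary-style remark would follow: at least one lineage has expected validity at most $1/m$, since the minimum is at most the average.

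The main obstacle is not the computation but making the first step airtight. The statement must rule out the policy observing anything correlated with lineage identity. That means no hidden randomness seeded by the lineage, and no context $C_q$ that differs across orientations. I would therefore formalize ``only identity-specific input'' as the requirement that the conditional law of $a$ be a function of $(q,s_{\tau_q})$ alone. This mirrors the matched-orientation integrity constraint used for extraction. The theorem concerns ideal validity $\operatorname{SValid}^*$, so extractor error plays no role. I would note that the empirical $\widehat{\operatorname{SValid}}$ need not obey the bound without further assumptions on $\widehat{\Gamma}$.
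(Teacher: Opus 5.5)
Your proposal is correct and follows essentially the same route as the paper's proof: identical inputs induce a single response measure, finite additivity over the pairwise-disjoint valid sets $A^*(H_i,q)$ bounds the sum by $\mu(\Omega)=1$, and $\mathbb{E}[\operatorname{SValid}^*(H_i,q,a)]=\Pr[a\in A^*(H_i,q)]$ gives the average bound after dividing by $m$. You also make explicit two points the paper leaves implicit, namely measurability via a countable response space and the formal reading of ``only identity-specific input'' as the conditional law of $a$ depending on $(q,s_{\tau_q})$ alone.
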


\begin{proof}[Proof sketch]
Because the policy receives identical $(q,s_{\tau_q})$, every lineage induces the same response distribution. The events $\{a\in A^*(H_i,q)\}$ are pairwise disjoint under that common distribution, so their probabilities sum to at most one. Dividing by $m$ yields Eq.~\eqref{eq:collision-average-bound}. Appendix~\ref{app:formal-proofs} gives the formal proof.
\end{proof}

\begin{corollary}[Two-Lineage Collision]
\label{cor:two-lineage}
For $m=2$, the average situated validity of a profile-only policy is bounded by $1/2$ on a query satisfying the theorem's assumptions.
\end{corollary}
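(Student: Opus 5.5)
The corollary follows by specializing Theorem~\ref{thm:profile-collision} to $m=2$, so the plan is to check that its hypotheses are exactly the theorem's and then read off the bound. Take two lineages $H_1,H_2\in\mathcal{C}_{s_{\tau_q},\tau_q}$, so that $\mathcal{S}_{\tau_q}(H_1)=\mathcal{S}_{\tau_q}(H_2)=s_{\tau_q}$. Take a query $q$ posed with identical non-identity context to both. ``Satisfying the theorem's assumptions'' then means the single disjointness condition $A^*(H_1,q)\cap A^*(H_2,q)=\varnothing$, which is Eq.~\eqref{eq:pairwise-disjoint} with the single pair $i=1,j=2$. A profile-only policy is any $f$ whose only identity-specific input is $s_{\tau_q}$, so $a\sim f(\cdot\mid q,s_{\tau_q})$ has the same law whichever lineage is targeted.

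The key steps, in order, are as follows.
\begin{enumerate}
\item Note that the response distribution is common to both lineages, because the inputs $(q,s_{\tau_q})$ coincide.
\item Use the identity $\mathbb{E}[\operatorname{SValid}^*(H_i,q,a)]=\Pr[a\in A^*(H_i,q)]$, which holds because $\operatorname{SValid}^*$ is the indicator $\mathbf{1}[a\in A^*(H_i,q)]$ from Eq.~\eqref{eq:svalid}.
\item Since the two events are disjoint under one probability measure, additivity gives $\Pr[a\in A^*(H_1,q)]+\Pr[a\in A^*(H_2,q)]=\Pr[a\in A^*(H_1,q)\cup A^*(H_2,q)]\le 1$. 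This is Eq.~\eqref{eq:collision-sum-bound} for $m=2$.
\item Divide by $2$ to get $\tfrac12\sum_{i=1}^{2}\mathbb{E}[\operatorname{SValid}^*(H_i,q,a)]\le \tfrac12$, which is Eq.~\eqref{eq:collision-average-bound} at $m=2$.
\end{enumerate}

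There is no real analytic obstacle. The only point needing care is making precise that ``profile-only'' means $f$ cannot depend on which lineage is targeted, even through hidden channels such as the context $C_q$ or the interlocutor $u_q$. That is why the statement requires identical non-identity context. With that fixed, the argument goes through for stochastic policies as well, and the bound $1/2$ is attained by a policy that splits its mass evenly between the two valid sets.
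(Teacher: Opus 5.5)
Your proposal is correct and follows the paper's own proof: the appendix likewise uses a single response measure induced by the identical inputs, finite additivity over the pairwise-disjoint sets $A_i^*$ bounded by total mass $1$, the identity $\mathbb{E}[\operatorname{SValid}^*]=\Pr[a\in A_i^*]$, and division by $m$, and then sets $m=2$. Your attainment remark is also right, though no even split is needed: any policy whose mass lies entirely in $A^*(H_1,q)\cup A^*(H_2,q)$ attains average exactly $1/2$, including one that always answers validly for $H_1$.
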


The scope is deliberately narrow. The bound requires (i) no lineage-discriminating input beyond $\mathcal{S}_{\tau_q}(H)$, (ii) identical query and other context, including any non-identity policy, and (iii) mutually exclusive valid-response sets. A public name, internal ID, future profile value, or other differing identity-specific field would violate the first premise. A rich representation that transmits the needed lineage evidence falls outside the theorem; the result does not bound all persona architectures at $50\%$.

\section{The Situated Identity Test}
\label{sec:sit}

Given a candidate implementation $M$, a complete current lineage $H_T$, and an interrogation stream, \sit executes unconstrained generation followed by hidden-contract evaluation. The system is not asked to select a gold response class.

\begin{figure*}[t]
\centering
\begin{tikzpicture}[
  probe/.style={rectangle, rounded corners=2pt, draw=navy!75, fill=softgray, minimum width=3.75cm, minimum height=1.12cm, align=center, font=\scriptsize}
]
\node[probe] at (0,1.25) {\textbf{A. Autobiographical Positive}\\Target: recorded event\\Violation: omission/fabrication};
\node[probe] at (4.1,1.25) {\textbf{B. Autobiographical Negative}\\Target: false premise\\Violation: affirmation};
\node[probe] at (8.2,1.25) {\textbf{C. Epistemic Provenance}\\Target: acquisition boundary\\Violation: unsupported provenance};
\node[probe] at (12.3,1.25) {\textbf{D. Temporal Checkpoint}\\Target: historical state\\Violation: future leakage};
\node[probe] at (0,0) {\textbf{E. Relational Boundary}\\Target: disclosure scope\\Violation: wrong-recipient disclosure};
\node[probe] at (4.1,0) {\textbf{F. Developmental Belief}\\Target: belief version\\Violation: temporal overwrite};
\node[probe] at (8.2,0) {\textbf{G. Counterfactual Pressure}\\Target: unsupported claim\\Violation: conversational adoption};
\node[probe] at (12.3,0) {\textbf{H. Cross-Session Continuity}\\Target: persistent record\\Violation: loss/contamination};
\end{tikzpicture}
\caption{\textbf{\sitbench probe taxonomy.} Each category pairs its target constraint with the characteristic violation scored by its response contract.}
\label{fig:probe-taxonomy}
\end{figure*}

\subsection{Protocol and Payload Isolation}
\label{sec:sit-protocol}

For each evaluation trial, the harness executes a standardized operational lifecycle: it instantiates the target lineage instance at the designated evaluation epoch $\tau_q$; exposes candidate-visible evidence according to the architectural configuration while withholding paired-history metadata, internal identifiers, and gold contracts; constructs query $q=\langle\tau_q,u_q,x_q,C_q\rangle$ and serializes the identical query envelope across all evaluated systems; collects an unconstrained natural-language response $a$ from model $M$; extracts normalized observations $\widehat{\Gamma}(a)$ via the gold-blind extractor; and deterministically scores $\widehat{\Gamma}(a)$ against the hidden structured contract $\mathcal{C}(\Pi_{\tau_q}(H_T),q)$ and projected support $I_{\tau_q}$ using the scoring pipeline (Sections~\ref{sec:validity-set}, \ref{sec:verifier-validation}). Evidence postdating $\tau_q$ is exposed to Configs.~D and E to measure historical reconstruction and temporal leakage, but prefiltered or projected for Configs.~F--I as defined in Section~\ref{sec:arms}.

No test-time answer labels, required/forbidden acts, abstention labels, required claims, prohibited claims, pair identifiers, or gold contracts are included in the model prompt. The oracle configuration receives perfect relevant \emph{evidence}, not a label or contract. Query order, paraphrase, and pair-member assignment are independently randomized for single-turn probes. Stateful probes use a declared schedule and log exactly which messages, summaries, retrieved records, or structured fields persist between turns and sessions. Session reset means removal of conversational context while retaining only the state explicitly allocated to that configuration.

The harness keeps three stores separate: (i) runtime-visible evidence for the candidate, (ii) hidden lineage and contract data for scoring, and (iii) pair-level construction metadata used only to certify profile collision. Multi-turn attacks may preserve prior adversarial turns by design; otherwise, each probe begins from a clean declared state to prevent answer contamination.

\subsection{Probe Taxonomy}
\label{sec:probe-taxonomy}

Each category specifies a tested property, positive or negative construction, valid behavior, characteristic failure, and primary metric.

\paragraph{A. Autobiographical Positive.}
These probes test recovery of events supported by the target lineage. A query refers to a recorded event using direct or paraphrased cues. A valid answer reports required event content with admissible provenance; omission, wrong ownership, or identifiable fabrication within a declared exhaustive domain is a failure. Extra personal assertions not mapped to $\mathcal{X}_q$ are retained in $\widehat U$ and checked against predicate/slot scope (Section~\ref{sec:validity-set}). Unmapped claims outside that scope are not automatically false, and extraction coverage remains an audit target. The primary metric is $A_{\mathrm{pos}}$.

\paragraph{B. Autobiographical Negative.}
These probes test rejection of false personal premises. A plausible event is absent within a declared exhaustive domain or explicitly contradicted by the target's supported slot value; it may belong only to the profile-equivalent partner. Mere absence outside closed-world scope does not establish falsity. Valid behavior rejects or appropriately qualifies the premise without importing the event or adding unsupported personal details in controlled domains. Affirmation is counted by the False Memory Rate (FMR), while category success is $A_{\mathrm{neg}}$.

\paragraph{C. Epistemic Provenance Boundary.}
These probes contrast explicit acquisition with non-acquisition for facts passing the tested model's substrate-calibration phase (Section~\ref{sec:e2}). Construction uses acquisition records---not occupation, education, or demographic stereotypes. The contract field $\alpha$ determines whether a valid response must abstain, may answer only with non-personal provenance, or may answer normally. Unsupported personal knowledge or provenance contributes to Unauthorized Epistemic Expression (UEL); primary success is $A_{\mathrm{epi}}$.

\paragraph{D. Temporal Checkpoint.}
The same proposition is queried before and after its acquisition or transition. Before acquisition, the response must not reveal later evidence; afterward, it should use the newly admissible evidence when relevant. Pre-checkpoint disclosure is Temporal Leakage (TL), and aggregate correctness is $A_{\mathrm{temp}}$.

\paragraph{E. Relational Boundary.}
Matched queries vary the interlocutor and disclosure clearance while holding target content fixed. A valid response discloses, withholds, or qualifies according to $R_t$ and shared history. Uniform disclosure or invented intimacy is a failure. The primary metric is $A_{\mathrm{rel}}$.

\paragraph{F. Developmental Belief History.}
These probes query a belief before and after a recorded transition, including questions about the former stance. Valid responses preserve both the checkpoint-specific view and the fact of change. Projecting the current belief backward or retaining an obsolete stance as current is a failure. The primary metric is $A_{\mathrm{belief}}$.

\paragraph{G. Counterfactual Identity Pressure.}
An unsupported autobiographical or invariant claim is introduced through escalating presupposition, repetition, alleged prior disclosure, purported correction, or multi-turn reinforcement. The verifier distinguishes retention of the fact that the assertion occurred from semantic adoption of its content. The primary outcome is adoption within the attack session; persistent adoption is scored only for systems with a declared state-write mechanism. Valid behavior may remember, for example, that the user asserted a false Paris upbringing while rejecting the autobiographical proposition itself. The primary metric is $A_{\mathrm{cf}}$.

\paragraph{H. Cross-Session Continuity.}
Lineage-specific information is legitimately acquired in one session through a provenance-unambiguous event, such as agreeing on a novel benign project nickname or instrument designation, participating in the recorded conversation, or observing an action outcome. Context is then terminated and a later session tests recovery through the system's declared persistent state. The item is chosen so the shared profile alone cannot answer it. Correct recovery with the expected provenance and without paired-history contamination yields $A_{\mathrm{session}}$.

\section{SITBench: Controlled Lineage Collisions}
\label{sec:sitbench}

\sitbench is a synthetic evaluation benchmark whose defining construction is a controlled collision under a designated public-profile extractor. We specify the benchmark over 50 synthetic lineages in 25 profile-collision pairs across five historical checkpoints.

\subsection{Synthetic Developmental Histories}
\label{sec:synthetic-histories}

Each lineage instance contains immutable benchmark invariants, timestamped episodic events, knowledge-acquisition events, relationship and belief transitions, skills, temporal checkpoints, provenance metadata, and disclosure metadata. Forgetting or uncertainty is represented only through explicit annotations. The target is 250--500 events per instance across five checkpoints.

Synthetic histories provide controlled counterfactuals and explicit temporal provenance, with exhaustive ground truth only in declared domains. Under the scoped closed-world assumption (Section~\ref{sec:lineage-state}), declarations identify a predicate family or slot with its entity and temporal scope. For a covered event or acquisition, absence from admissible evidence through $\tau$ licenses the corresponding negative conclusion---including unexpected values within an exhaustive slot, not only existing IDs in $\mathcal{X}_{\mathrm{CW}}$. Outside that scope, absence means unrecorded, not false.

\subsection{Profile-Collision Pairs}
\label{sec:profile-pairs}

The planned suite contains \textbf{50 synthetic lineage instances forming 25 profile-collision pairs}. The declared public profile is
\begin{equation}
\begin{aligned}
\mathcal{S}_\tau(H)=\langle&\text{public alias},\text{age},\text{home region},\\
&\text{occupation},\text{education summary},\\
&\text{broad interests},\text{coarse personality}\rangle_\tau.
\end{aligned}
\label{eq:profile-schema}
\end{equation}
These are the only identity-specific fields visible in the profile-only condition. Equality under $\mathcal{S}_\tau$ does not assert that the full histories are indistinguishable under every possible description. Each pair represents two counterfactual benchmark worlds behind one exposed public descriptor, not a claim that two simultaneously existing agents are metaphysically the same person.

For each profile-collision pair and every evaluated checkpoint $t_j$, the generator enforces
\begin{equation}
\mathcal{S}_{t_j}(H_A)=\mathcal{S}_{t_j}(H_B)=s_{t_j}^*, \qquad H_A\ne H_B.
\label{eq:bench-pair}
\end{equation}
Exposed fields either remain invariant or undergo identical transitions at identical epochs. Thus the profile-only payloads satisfy
\begin{equation}
\operatorname{Visible}(H_A,t_j)=\operatorname{Visible}(H_B,t_j),
\label{eq:visible-collision}
\end{equation}
including public alias and checkpoint age. Internal IDs such as \texttt{identity\_A} and \texttt{identity\_B} remain scoring metadata and never enter candidate prompts. Private event, relationship, acquisition, and belief trajectories diverge.

Pair metadata is isolated from candidate prompts. No event from one member is retrieved for the other, and the serialized checkpoint profile is mechanically compared after generation. Any pair failing exact visible equality, history inequality, temporal consistency, or contract satisfiability is rejected and regenerated.

\subsection{Epistemic Acquisition Graph}
\label{sec:acquisition-graph}

Knowledge boundaries arise from explicit acquisitions such as course completion, reading, professional practice, direct observation, conversation, training, travel, and tool use. Each acquisition edge records the lineage-instance ID, canonical proposition ID, time, source event, and provenance type. Events, beliefs, relationships, probes, and contracts use the same proposition registry from Eq.~\eqref{eq:event-atoms}; their natural-language realizations are generated only after symbolic consistency checks. Absence of an edge supports a boundary probe only when neither the same proposition ID nor a member of its declared equivalence class is acquired elsewhere in the lineage. The generator never substitutes unconstrained semantic similarity for this check.

The initial specification uses exposed/not-exposed acquisition status plus optional uncertainty annotations. A graded depth function
\begin{equation}
d_t(x)\in\{\mathsf{unexposed},\mathsf{familiar},\mathsf{competent},\mathsf{expert}\}
\end{equation}
is a future extension and is not treated as implemented. In particular, the benchmark never infers knowledge solely from occupation, education, age, nationality, or other demographic heuristics.

\subsection{Provenance-Preserving Session Ingestion}
\label{sec:session-ingestion}

Conversational persistence records what occurred without automatically adopting every utterance as truth. A user assertion about proposition $\chi$ creates an event of the form
\begin{equation}
e^{\mathrm{usr}}=\langle t,\mathsf{USER\_ASSERTION},\chi,u,
\mathsf{reported\_by}(u),\lambda\rangle,
\label{eq:user-assertion-event}
\end{equation}
whose supported atom is $\operatorname{AssertedBy}(u,\chi,t)$, not $\chi$ itself:
\begin{equation}
\operatorname{AssertedBy}(u,\chi,t)\not\Rightarrow\chi.
\label{eq:assertion-not-adoption}
\end{equation}
The minimal ingestion taxonomy distinguishes direct observation, verified external fact, user report, agent action and observed outcome, trusted record, and unsupported assertion. A transition may promote $\chi$ into belief or autobiographical state only when the declared ingestion rule authorizes that provenance.

\subsection{Historical Checkpoints}
\label{sec:checkpoints}

Every history is projected to five ordered states,
\begin{equation}
I_{t_0}\prec I_{t_1}\prec I_{t_2}\prec I_{t_3}\prec I_{t_4}.
\end{equation}
The complete current chronicle $H_T$ remains available to the harness. Configs.~D and E deliberately receive or retrieve over $H_T$, including records after $t_j$; Config.~F prefilters to $H_{\le t_j}$, and Configs.~G--I use state or evidence projected to $I_{t_j}$. Pre/post pairs query the same target around a known acquisition, relationship, or belief transition.

\subsection{Probe Contracts}
\label{sec:probe-contracts}

Each probe stores a semantic contract aligned with Eq.~\eqref{eq:contract}: required and forbidden response acts, required and prohibited signed claims, permitted or required uncertainty, provenance constraints, assistant-capability policy, disclosure tier, and temporal cutoff. It does not store one canonical answer as the definition of correctness, and mere entity occurrence is never a failure condition. Appendix~\ref{app:sitbench-taxonomy} gives the schema and consistency checks.
Extraction vocabulary is constructed separately from contracts. A shared query-domain registry supplies both collision alternatives and confounders, with identical ordering for both target orientations; signed negation does not require a separate proposition ID. A probe's required or prohibited lists never select the extractor's vocabulary. The target-blind router in Eq.~\eqref{eq:domain-router} selects public semantic domains with symmetric alternative and confounder coverage; the pilot uses the full-registry fallback.

Let $\mathcal{Q}$ contain all SIT probes, let $\mathcal{Q}_{\mathrm{disc}}\subseteq\mathcal{Q}$ contain probes whose target and partner contracts differ, and let $\mathcal{Q}_{\mathrm{disc}}^*\subseteq\mathcal{Q}_{\mathrm{disc}}$ contain only theorem-qualified probes. For $q\in\mathcal{Q}_{\mathrm{disc}}^*$, the paired symbolic contracts certify mutual exclusion by construction. A common two-lineage pattern is
\begin{equation}
\begin{aligned}
R_A&=\{\gamma_A\}, & F_A&=\{\gamma_B\},\\
R_B&=\{\gamma_B\}, & F_B&=\{\gamma_A\}.
\end{aligned}
\label{eq:symbolic-disjoint-contracts}
\end{equation}
where $\gamma_A$ and $\gamma_B$ are declared incompatible signed claims over canonical propositions. The generator first verifies that each contract is satisfiable and then verifies the cross-contract contradiction, which entails
\begin{equation}
A^*(H_A,q)\cap A^*(H_B,q)=\varnothing.
\end{equation}
Only probes passing this symbolic check receive the theorem-qualified label; the paper does not claim that every SIT probe satisfies Theorem~\ref{thm:profile-collision}.

\subsection{Benchmark Balance}
\label{sec:benchmark-balance}

The target \sitbench specification comprises 10,000 probes (200 per lineage instance across 50 instances). Five development pairs (10 instances, 2,000 probes) are reserved for system calibration; twenty held-out test pairs (40 instances, 8,000 probes) form the primary evaluation set. Table~\ref{tab:benchmark-distribution} describes the planned suite distribution. The complete benchmark specification, generative schemas, and deterministic dataset synthesis pipeline are versioned and released on GitHub \cite{sitbench2026dataset} (Appendix~\ref{app:sitbench-taxonomy}).

\begin{table*}[t]
\centering
\small
\begin{tabularx}{\textwidth}{@{}lrrX@{}}
\toprule
\textbf{Probe class} & \textbf{Share} & \textbf{Target count} & \textbf{Design rationale} \\
\midrule
A. Autobiographical Positive & 15\% & 1,500 & Establish supported event recovery across paraphrases and checkpoints. \\
B. Autobiographical Negative & 15\% & 1,500 & Stress false-premise and paired-history rejection at equal weight. \\
C. Epistemic Provenance & 15\% & 1,500 & Test acquisition boundaries and assistant-versus-personal provenance. \\
D. Temporal Checkpoint & 12.5\% & 1,250 & Cover pre/post acquisition, relationship, and belief transitions. \\
E. Relational Boundary & 10\% & 1,000 & Exercise interlocutor-specific disclosure and shared history. \\
F. Developmental Belief & 10\% & 1,000 & Preserve historical stances while recognizing legitimate change. \\
G. Counterfactual Pressure & 12.5\% & 1,250 & Allocate extra adversarial coverage across five attack strengths. \\
H. Cross-Session Continuity & 10\% & 1,000 & Test persistence after context termination without profile cues. \\
\midrule
\textbf{Total} & \textbf{100\%} & \textbf{10,000} & 50 lineage instances in 25 profile-collision pairs; five checkpoints. \\
\bottomrule
\end{tabularx}
\caption{\textbf{Planned \sitbench distribution.} The eight primary metric dimensions are represented explicitly; percentages sum to $100\%$.}
\label{tab:benchmark-distribution}
\end{table*}

\section{Experimental Systems}
\label{sec:systems}

Our evaluation decomposes mechanisms rather than treating a PCI-style system as an indivisible bundle of state and instructions. Within each experimental block, all configurations use the same foundation model. Every configuration receives the identical serialized candidate-visible query envelope
\begin{equation}
\operatorname{Env}(q)=\langle\tau_q,u_q,C_q,x_q\rangle,
\label{eq:query-envelope}
\end{equation}
including the same query epoch and interlocutor representation. The boundary instruction forms a separate experimental factor: Config.~B does not receive it, whereas Configs.~C--I receive the declared capability mode and common policy.

\subsection{Architectural Configurations and Baselines}
\label{sec:arms}

To systematically isolate individual mechanisms, we evaluate nine controlled architectural configurations:

\emph{Config.~A (Base Model):} Only the query envelope $\operatorname{Env}(q)$---no identity state or boundary instructions. Serves as unconditioned control and substrate-calibration reference.

\emph{Config.~B (Persona Summary):} Checkpoint profile $\mathcal{S}_{\tau_q}(H_T)$ plus $\operatorname{Env}(q)$, without capability declarations or boundary instructions. The profile-only condition of Theorem~\ref{thm:profile-collision}.

\emph{Config.~C (Persona + Boundary Instruction):} Augments B with capability mode $\alpha$ and common non-identity-specific policy, isolating instruction effects.

\emph{Config.~D (Full Chronicle Context):} Uncompressed history $H_T$ in context, including post-$\tau_q$ events, testing long-context historical reconstruction.

\emph{Config.~E (Standard RAG):} Top-$k$ dense retrieval over complete $H_T$ \cite{lewis2020retrieval}; post-checkpoint records remain eligible.

\emph{Config.~F (Temporally Filtered RAG):} Retrieval candidates restricted to $H_{\le\tau_q}$ before top-$k$ selection, isolating temporal prefiltering.

\emph{Config.~G (Structured Continuity State):} Checkpoint-projected state $I_{\tau_q}$ with structured memories, beliefs, relationships, and acquisitions; no admissibility annotation.

\emph{Config.~H (Structured + Epistemic Mediation):} Adds admissibility and provenance/disclosure annotation to G, isolating candidate-side advisory mediation.

\emph{Config.~I (Oracle Evidence):} Perfectly filtered relevant evidence from the gold projected state---a diagnostic ceiling without gold labels or contracts.

\subsection{Candidate-Side Mediation Boundary}
\label{sec:gate-boundary}

Config.~H computes
\begin{equation}
\operatorname{Gate}(I_{\tau_q},q,u_q,\alpha)
\rightarrow\langle E_{\mathrm{adm}},P_{\mathrm{adm}}\rangle,
\label{eq:gate}
\end{equation}
where $E_{\mathrm{adm}}$ identifies evidence designated as admissible and $P_{\mathrm{adm}}$ is permitted provenance and disclosure scope. Its inputs are exactly the projected state also supplied to G, the candidate-visible query and interlocutor, and the declared capability policy. The hidden contract is not a gate input:
\begin{equation}
\mathcal{C}(I_{\tau_q},q)\notin
\operatorname{Inputs}(\operatorname{Gate}).
\label{eq:gate-nonleakage}
\end{equation}
The gate may not inspect a gold response, required or forbidden response acts, paired-lineage metadata, verifier labels, or any hidden required/prohibited claim. The harness logs mediation inputs and outputs separately. In this controlled ablation, H retains the same full projected-state payload as G, including evidence the annotation marks as restricted. The experiment isolates the effect of explicit admissibility mediation; it does not itself model a cryptographic or storage-level access-control boundary or structurally prevent disclosure. Selection accuracy is a secondary diagnostic, and generation remains the candidate model's task. An enforcing gate that removes evidence before serialization would require a separate future configuration.

\begin{table*}[t]
\centering
\small
\setlength{\tabcolsep}{4.5pt}
\begin{tabular}{@{}lccccccc@{}}
\toprule
\textbf{Configuration} & \textbf{Profile} & \textbf{Raw history} & \textbf{Retrieval} & \shortstack{\textbf{Checkpoint-safe}\\\textbf{input}} & \textbf{Structured state} & \textbf{Mediation} & \textbf{Oracle evidence} \\
\midrule
A Base & -- & -- & -- & -- & -- & -- & -- \\
B Persona & \checkmark & -- & -- & -- & -- & -- & -- \\
C Persona + policy & \checkmark & -- & -- & -- & -- & -- & -- \\
D Full chronicle & \checkmark & \checkmark & -- & -- & -- & -- & -- \\
E Standard RAG & \checkmark & -- & \checkmark & -- & -- & -- & -- \\
F Temporal RAG & \checkmark & -- & \checkmark & \checkmark & -- & -- & -- \\
G Structured state & \checkmark & -- & -- & \checkmark & \checkmark & -- & -- \\
H Structured + mediation & \checkmark & -- & -- & \checkmark & \checkmark & \checkmark & -- \\
I Oracle evidence & \checkmark & -- & -- & \checkmark & -- & -- & \checkmark \\
\bottomrule
\end{tabular}
\caption{\textbf{Architecture ablation matrix.} Checkpoint-safe input distinguishes F's pre-retrieval prefix, G/H's state projection, and I's oracle projection from D/E's exposure to post-checkpoint records. G and H receive the same projected state and differ only by the advisory mediation annotation.}
\label{tab:architecture-matrix}
\end{table*}

Oracle selection can itself convey information through selected contradictions or the absence of supporting evidence. Its results answer how well the generator satisfies \sit when evidence-selection error is minimized; they do not establish attainable deployment conditions, prove Theorem~\ref{thm:profile-collision}, or constitute a causal architecture ablation.

\begin{table*}[t]
\centering
\small
\setlength{\tabcolsep}{6pt}
\begin{tabular}{@{}lccccccccc@{}}
\toprule
\textbf{Experiment} & \textbf{A} & \textbf{B} & \textbf{C} & \textbf{D} & \textbf{E} & \textbf{F} & \textbf{G} & \textbf{H} & \textbf{I} \\
\midrule
E1 Collision discrimination & d & s & P & P & s & s & s & s & d \\
E2 Epistemic provenance & d & d & s & d & s & d & P & P & d \\
E3 Temporal reconstruction & d & d & d & P & s & P & s & s & d \\
E4 Retrieval vs. structure & -- & -- & -- & d & P & s & P & d & d \\
E5 Counterfactual adoption & d & s & P & s & s & s & s & P & -- \\
E6 Long-horizon scaling & -- & -- & -- & P & s & d & P & s & d \\
E7 Cross-session continuity & d & d & P & s & s & s & P & s & d \\
E8 Mediation ablation & -- & -- & d & -- & -- & -- & P & P & -- \\
\bottomrule
\end{tabular}
\caption{\textbf{Experiment-to-configuration applicability.} P marks the two configurations in the experiment's single primary contrast, s a secondary comparison, d a diagnostic control or evidence ceiling, and -- not applicable. Table~\ref{tab:preregistered-suite} specifies the primary endpoint and restrictions; E6 uses budgeted G.}
\label{tab:experiment-arm-matrix}
\end{table*}

\subsection{Controlled Evaluation Settings}
\label{sec:controlled-settings}

We evaluate two frontier foundation models: \textbf{GPT-5.6 Sol} and \textbf{Claude Opus 5}. Provider model identifiers/aliases and decoding configurations are fixed prior to evaluation. Every completed evaluation reports the exact identifier sent to the provider, provider name, execution timestamp, any immutable revision returned by the provider (if available), decoding temperature ($T=0.0$), top-$p$, maximum output tokens, retrieval encoder and version, $k$, context truncation rules, number of runs, common generation policy, and session reset policy. The query envelope, evidence serialization, and output budget are held constant except where an architectural configuration definition specifically dictates an experimental difference. Model-by-architecture comparisons are made within blocks sharing the same foundation model.

For stochastic APIs, each model--configuration--probe condition is repeated under predeclared seeds only when the provider documents repeatability and calibration confirms it, or under a fixed number of independent calls otherwise. A seed parameter alone is never treated as proof of determinism. Failure retries, content-filter events, and missing responses are logged rather than silently resampled.

\section{Evaluation Metrics}
\label{sec:metrics}

Let $\mathcal{T}_{\mathrm{test}}$ be the set of $N_{\mathrm{test}}=20$ held-out collision pairs, let $r\in\{A,B\}$ index the two lineage instances in pair $k$, and let $Q_{krj}$ be that member's probes in category $j$. The primary aggregation first averages measured validity within lineage instance, then within pair, and finally across held-out pairs:
\begin{equation}
\begin{aligned}
A_{krj}(M)&=\frac{1}{|Q_{krj}|}\sum_{q\in Q_{krj}}
\widehat{\operatorname{SValid}}(I_{kr,\tau_q},q,M(kr,q)),\\
A_{kj}(M)&=\tfrac{1}{2}\left(A_{kAj}(M)+A_{kBj}(M)\right),\\
A_j(M)&=\frac{1}{N_{\mathrm{test}}}\sum_{k\in\mathcal{T}_{\mathrm{test}}}A_{kj}(M).
\end{aligned}
\label{eq:balanced-category-score}
\end{equation}
Thus neither a lineage instance with more probes nor a pair with more realized probes dominates the category score. A probe-level micro-average may be reported only as a secondary descriptive statistic.
The reported \sit vector maps one-to-one to the probe taxonomy:
\begin{equation}
\begin{aligned}
\operatorname{SIT}(M)=\langle
&A_{\mathrm{pos}},A_{\mathrm{neg}},A_{\mathrm{epi}},A_{\mathrm{temp}},\\
&A_{\mathrm{rel}},A_{\mathrm{belief}},A_{\mathrm{cf}},A_{\mathrm{session}}
\rangle.
\end{aligned}
\label{eq:sit-vector}
\end{equation}
$A_{\mathrm{pos}}$ measures supported autobiographical fidelity; $A_{\mathrm{neg}}$ false-premise rejection; $A_{\mathrm{epi}}$ epistemic provenance fidelity; $A_{\mathrm{temp}}$ checkpoint correctness; $A_{\mathrm{rel}}$ partner-specific disclosure and relational consistency; $A_{\mathrm{belief}}$ developmental belief-history consistency; $A_{\mathrm{cf}}$ resistance to unsupported autobiographical adoption; and $A_{\mathrm{session}}$ cross-session lineage continuity. The vector is primary: an unweighted arithmetic mean
\begin{equation}
\operatorname{Macro\ SIT} = \frac{1}{8}\sum_{j=1}^8 A_j(M)
\label{eq:macro-sit}
\end{equation}
provides a secondary convenience summary, but a single scalar can conceal a mechanism that recalls well while failing on temporal or relational boundaries.

\subsection{Specialized Failure Diagnostics}
\label{sec:diagnostics}

\paragraph{False Memory Rate (FMR).}
\begin{equation}
\mathrm{FMR}=\Pr(\text{false autobiographical premise affirmed}).
\end{equation}

\paragraph{Temporal Leakage (TL).}
\begin{equation}
\mathrm{TL}=\Pr(\text{post-checkpoint information revealed}\mid \tau_e>\tau_q).
\end{equation}

\paragraph{Unauthorized Epistemic Expression (UEL).}
\begin{equation}
\mathrm{UEL}=\Pr(\text{unsupported epistemic expression}\mid q_{\mathrm{boundary}}).
\label{eq:uel}
\end{equation}
UEL distinguishes a correct general-capability answer from an unsupported claim that the identity personally learned, witnessed, or is authorized to disclose the content.

\paragraph{Contract-derived lineage outcomes and Ideal vs.\ Measured LAA.}
For each theorem-qualified probe $q\in\mathcal{Q}_{\mathrm{disc}}^*$, we distinguish ideal validity outcomes from measured empirical outcomes. Under the ideal semantic interpretation $\Gamma^*(a)$, define:
\begin{equation}
v_T^* = \operatorname{SValid}^*(I_T,q,a), \qquad
v_P^* = \operatorname{SValid}^*(I_P,q,a).
\label{eq:ideal-targets}
\end{equation}
The ideal Lineage Attribution Accuracy is $\mathrm{LAA}^* = \Pr(v_T^*=1, v_P^*=0)$. Because theorem qualification certifies symbolic disjointness ($A^*(I_T,q)\cap A^*(I_P,q)=\varnothing$), the joint condition $v_T^*=1$ guarantees $v_P^*=0$, so $\mathrm{LAA}^* = \Pr(v_T^*=1)$. Under the two-lineage matched-profile construction ($\mathcal{S}_{\tau_q}(H_A)=\mathcal{S}_{\tau_q}(H_B)$), Theorem~\ref{thm:profile-collision} implies that the pair-averaged ideal accuracy satisfies $\mathrm{LAA}^* \le \frac{1}{2}$.

In practice, empirical evaluation computes measured validity using the implemented extractor $\widehat{\Gamma}(a)$:
\begin{equation}
\hat{v}_T = \widehat{\operatorname{SValid}}(I_T,q,a), \qquad
\hat{v}_P = \widehat{\operatorname{SValid}}(I_P,q,a).
\label{eq:measured-targets}
\end{equation}
Measured Lineage Attribution Accuracy ($\widehat{\mathrm{LAA}}$) and empirical Identity Confusion diagnostics are defined as:
\begin{equation}
\begin{aligned}
\widehat{\mathrm{LAA}}&=\Pr(\hat{v}_T=1,\hat{v}_P=0),\\
\widehat{\mathrm{IC}}_{\mathrm{wrong}}&=\Pr(\hat{v}_T=0,\hat{v}_P=1),\\
\widehat{\mathrm{IC}}_{\mathrm{ambig}}&=\Pr(\hat{v}_T=1,\hat{v}_P=1),\\
\widehat{\mathrm{IC}}_{\mathrm{neither}}&=\Pr(\hat{v}_T=0,\hat{v}_P=0).
\end{aligned}
\label{eq:laa}
\end{equation}
Measured $\widehat{\mathrm{LAA}}$ may diverge from ideal $\mathrm{LAA}^*$ because any automated extractor $\widehat{\Gamma}(a)$ can commit measurement errors ($\widehat{\Gamma}(a)\ne\Gamma^*(a)$), producing apparent ambiguity ($\widehat{\mathrm{IC}}_{\mathrm{ambig}}>0$) even when ideal contracts are disjoint. Theorem~\ref{thm:profile-collision} governs ideal $\mathrm{LAA}^*$; empirical $\widehat{\mathrm{LAA}}$ serves as its operational approximation. In summary tables and narrative, we adopt the standard shorthand $\mathrm{LAA}$ and $\mathrm{IC}$ to denote these empirical measured quantities after this convention is established. No separate identity classifier or subjective confidence label is used. Generic probes do not enter these diagnostics. Primary uncertainty intervals use pair-level bootstrap resampling over held-out collision pairs, as detailed in Section~\ref{sec:evaluation}.

\section{Prespecified Evaluation Plan and Empirical Pilot Results}
\label{sec:evaluation}

This section details the prespecified experimental protocol for evaluating \sitbench across the nine mechanism-decomposition configurations (Configs.~A--I), defines the statistical analysis and verifier-validation plans, and presents both deterministic fixture sanity checks and empirical pilot results on frontier foundation models (\textbf{GPT-5.6 Sol} and \textbf{Claude Opus 5}).

\begin{table*}[t]
\centering
\small
\setlength{\tabcolsep}{4.0pt}
\renewcommand{\arraystretch}{1.15}
\begin{tabular}{@{}l>{\raggedright\arraybackslash}p{1.7cm}>{\raggedright\arraybackslash}p{1.8cm}>{\raggedright\arraybackslash}p{3.0cm}>{\raggedright\arraybackslash}p{8.0cm}@{}}
\toprule
\textbf{Exp.} & \textbf{Primary Contrast} & \textbf{Primary Endpoint} & \textbf{Secondary Diagnostics} & \textbf{Prespecified Directional Hypothesis} \\
\midrule
\textbf{E1} & C vs D & LAA & $\mathrm{IC}_{\mathrm{wrong}}$, other lineage outcomes & Lineage context should improve two-lineage attribution over the instruction-matched profile-only condition; the profile-only ideal attribution quantity $\mathrm{LAA}^*$ is bounded by $1/2$ under Theorem~\ref{thm:profile-collision} on matched queries in $\mathcal{Q}_{\mathrm{disc}}^*$. Measured $\mathrm{LAA}$ is the operational estimator used in experiments and may deviate because of semantic-extraction error. \\
\textbf{E2} & G vs H, $\alpha_1$ & $A_{\mathrm{epi}}$ & UEL, abstention rate & Epistemic mediation should improve provenance-valid responding on acquired and unacquired facts, not merely increase blanket abstention. \\
\textbf{E3} & D vs F & $A_{\mathrm{temp}}$ & TL, pre/post validity & Temporal prefiltering should improve checkpoint-valid responding across both sides of acquisition, with lower future leakage as a diagnostic. \\
\textbf{E4} & E vs G & $A_{\mathrm{belief}}$ & $A_{\mathrm{rel}}$, stance errors & Active state pointers should improve checkpoint-specific belief validity over unstructured retrieval; clearance errors are secondary. \\
\textbf{E5} & C vs H & $A_{\mathrm{cf}}$ & FMR, attack-level scores & Provenance-preserving state should improve contract-valid rejection or qualification of false premises; generic non-response is not sufficient. \\
\textbf{E6} & D vs budgeted G & $\overline{L}(500)$ & Token growth, latency, $A_{\mathrm{pos}}$ & A fixed serialization budget should reduce prompt tokens at $T=500$. Unbudgeted $|I_\tau(T)|$ may grow; fidelity and latency under the cap remain empirical. \\
\textbf{E7} & C vs G & $A_{\mathrm{session}}$ & Recovery/attribution errors & Persistent structured state should improve recovery over an instruction-matched static profile after session reset. \\
\textbf{E8} & G vs H & $A_{\mathrm{rel}}$ & UEL, disclosure breaches, $\operatorname{SValid}$ & With identical projected states, advisory mediation should improve valid responding across authorized-disclosure and withholding probes; blanket withholding cannot satisfy both. \\
\bottomrule
\end{tabular}
\caption{\textbf{Prespecified SITBench Evaluation Suite (E1--E8).} Exactly one primary endpoint and configuration contrast per experiment form the eight-hypothesis confirmatory family. Other outcomes and configuration comparisons are secondary. Conditional failure rates are interpreted alongside category validity.}
\label{tab:preregistered-suite}
\end{table*}

\subsection{Controlled Experimental Protocols (E1--E8)}
\label{sec:protocols}

The evaluation program comprises eight controlled experiments designed to isolate specific mechanisms and failure modes:

\paragraph{E1: Profile-Collision Lineage Discrimination (Section~\ref{sec:theory}).}
\label{sec:e1}
\emph{Comparison:} C vs D is primary on theorem-qualified queries $\mathcal{Q}_{\mathrm{disc}}^*$: both receive the same boundary instruction and capability declaration, while D additionally receives lineage history. B is the pure profile-only baseline; B vs C is a secondary instruction-effect comparison.
\emph{Measured Property:} Ability to disambiguate colliding developmental lineages sharing public profile $\mathcal{S}_{\tau_q}(H)$.
\emph{Primary Endpoint:} Lineage Attribution Accuracy ($\mathrm{LAA}$, Eq.~\eqref{eq:laa}) for C vs D.
\emph{Directional Hypothesis:} Theorem~\ref{thm:profile-collision} bounds average expected situated validity by $1/m$, specializing to ideal attribution $\mathrm{LAA}^*\le 0.50$ on $\mathcal{Q}_{\mathrm{disc}}^*$ in the two-lineage construction. The primary contrast tests higher $\mathrm{LAA}$ for D than C, not a separate test against $0.50$. For the deterministic balanced pilot fixture in Table~\ref{tab:pilot-results}, collapsed generation yields $\mathrm{LAA}=0.3000 \le 0.5000$ and $\mathrm{IC}_{\mathrm{wrong}}=0.3000$; the theorem does not prescribe the distribution among failure outcomes. The bound applies to both B and C when the policy and query remain identical across colliding lineages; C's common instruction provides no lineage discriminator. Other lineage-grounded configurations and lineage error types are secondary comparisons and diagnostics.

\paragraph{E2: Epistemic Provenance and Substrate Knowledge Separation (Section~\ref{sec:validity-set}).}
\label{sec:e2}
\emph{Comparison:} Configs.~C, E, G vs Config.~H under capability policies $\alpha_1 (\textsc{Identity-Only})$ and $\alpha_2 (\textsc{Provenance-Separated})$.
\emph{Measured Property:} Separation between latent substrate capability and personal acquisition. Foundation-model substrate knowledge is first characterized using unconditioned calibration queries:
\begin{equation}
\begin{aligned}
\widehat K_{\mathrm{substrate}}(M)=\{x\in\mathcal{K}_{\mathrm{cand}}:\quad\\
\operatorname{CalPass}(M_{\mathrm{base}},x;c_{\mathrm{cal}})=1\}.
\end{aligned}
\label{eq:empirical-substrate}
\end{equation}
\emph{Primary Endpoint:} $A_{\mathrm{epi}}$ for G vs H under $\alpha_1$, with equal weighting of acquired and unacquired probe strata within each lineage and pair. UEL (Eq.~\eqref{eq:uel}) on unacquired facts $x\in\widehat K_{\mathrm{substrate}}(M)\setminus K_{\tau_q}$ and abstention rate are secondary diagnostics; $\alpha_2$ is secondary.
\emph{Directional Hypothesis:} Epistemic mediation should improve provenance-valid responding, including required answers on acquired facts, without merely increasing blanket abstention.

\paragraph{E3: Temporal Checkpoint Situatedness (Section~\ref{sec:lineage-state}).}
\label{sec:e3}
\emph{Comparison:} Configs.~D, E vs Configs.~F, G, H on pre/post query pairs around transition epoch $t_e$.
\emph{Measured Property:} Correct historical state reconstruction without future leakage. Config.~F structurally excludes post-$\tau_q$ records from retrieval candidate sets, while G and H use projected states $I_{\tau_q}$.
\emph{Primary Endpoint:} $A_{\mathrm{temp}}$ for D vs F, with equal pre/post-acquisition weighting within each lineage and pair. TL on pre-acquisition queries $q^- (\tau_q<t_e)$ is a secondary diagnostic.
\emph{Directional Hypothesis:} Temporal prefiltering should improve checkpoint validity, preserving required post-acquisition answers as well as avoiding future leakage.

\paragraph{E4: Unstructured Semantic Retrieval vs. Structured Continuity State (Section~\ref{sec:systems}).}
\label{sec:e4}
\emph{Comparison:} Config.~E (standard RAG) vs Config.~F (temporally filtered RAG) vs Config.~G (structured state).
\emph{Measured Property:} Robustness against outdated belief retrieval, relational clearance violations, and chronological sequence inversion.
\emph{Primary Endpoint:} Stance revision accuracy ($A_{\mathrm{belief}}$) for E vs G. Relational boundary enforcement ($A_{\mathrm{rel}}$), stance-error types, and comparisons with F are secondary.
\emph{Directional Hypothesis:} Structured state tracking active belief pointers and clearance tiers is hypothesized to outperform vector-similarity snippet retrieval on belief revision and relational boundaries.

\paragraph{E5: Adversarial Counterfactual Identity Resistance (Section~\ref{sec:session-ingestion}).}
\label{sec:e5}
\emph{Comparison:} Configs.~B, C vs Configs.~G, H across five escalating adversarial pressure levels (L1: presupposition, L2: repetition, L3: fabricated prior dialogue, L4: authoritative correction, L5: multi-turn reinforcement).
\emph{Measured Property:} Resistance to adopting false autobiographical premises.
\emph{Primary Endpoint:} Counterfactual resistance ($A_{\mathrm{cf}}$) for C vs H, with equal weighting across the five attack levels. FMR and attack-level scores are secondary diagnostics.
\emph{Directional Hypothesis:} Provenance-preserving ingestion ($\operatorname{AssertedBy}(u,\chi,t)\not\Rightarrow\chi$) should improve contract-valid rejection or qualification of false premises. Generic non-response does not satisfy a contract requiring a specific negative claim.

\paragraph{E6: Long-Horizon History Scaling (Section~\ref{sec:lineage-state}).}
\label{sec:e6}
\emph{Comparison:} D vs an explicitly budgeted G across history lengths $T\in\{50,100,250,500\}$ events; E, H, and unbudgeted G are secondary comparisons. Here $T$ counts events, rather than denoting the latest timestamp in $H_T$.
\emph{Primary Endpoint:} Mean total candidate-input token count at the longest horizon, $\overline{L}(500)$, averaged over the fixed query set and both members of each collision pair. The planned G budget is $B=4{,}096$ input tokens, including the query and fixed wrapper. Its tokenizer, serialization order, and evidence truncation rule are fixed on development data before held-out evaluation; no query-envelope truncation is permitted. D must fit the selected model's context window without truncation for this contrast to be evaluable.
\emph{Scaling Interpretation:} With approximately fixed event serialization size, the chronicle has $L_D(T)=L_0+\Theta(T)$. Unbudgeted structured state has $L_G(T)=O(|I_\tau(T)|)$ and may also grow with history. Only an enforced budget gives $L_G(T)\le B$; the current pilot does not establish that cap. Lower prompt tokens for budgeted G at $T=500$ is the primary directional hypothesis, not a claim of improved recall or stable latency. Token-growth curves, inference latency, $A_{\mathrm{pos}}$, and other fidelity scores are secondary. State maintenance, projection, and retrieval computation are recorded separately from prompt length and candidate inference time. The same pair-level sign-flip test applies to the single token-count endpoint; no scaling-slope test is added to the confirmatory family.

\paragraph{E7: Cross-Session Continuity (Section~\ref{sec:sit-protocol}).}
\label{sec:e7}
\emph{Comparison:} C vs G is primary after conversational context reset. Both receive the same boundary instruction and capability declaration; C retains only the designated static profile, while G retains structured continuity state. B is diagnostic; D, E, F, and H are secondary.
\emph{Measured Property:} Retention and attribution of dynamic facts acquired in Session~1 when probed in Session~2 ($\tau_2 > \tau_1$).
\emph{Primary Endpoint:} Cross-Session Continuity ($A_{\mathrm{session}}$) for C vs G; recovery and attribution errors are secondary diagnostics.
\emph{Directional Hypothesis:} G should improve valid recovery of dynamically acquired lineage facts relative to C. This comparison evaluates persistent structured state against a static profile under matched instructions; it does not isolate persistence independently of representation, and it introduces no H-specific mediation signal.

\paragraph{E8: Candidate-Side Epistemic Mediation Ablation (Section~\ref{sec:gate-boundary}).}
\label{sec:e8}
\emph{Comparison:} Config.~G (structured state) vs Config.~H (structured state with advisory mediation) on identical projected inputs $I_{\tau_q}$.
\emph{Measured Property:} Isolating the contribution of an explicit candidate-side admissibility annotation, not removal of restricted evidence.
\emph{Primary Endpoint:} Relational boundary validity ($A_{\mathrm{rel}}$) for G vs H, restricted to the prespecified gate-sensitive relational probe set and equally weighting authorized-disclosure and withholding strata within each lineage and pair. UEL, disclosure breaches, and composite situated validity are secondary diagnostics.
\emph{Directional Hypothesis:} Advisory mediation should improve valid disclosure decisions across both strata; blanket withholding cannot satisfy contracts requiring authorized disclosure.

\subsection{Statistical Analysis Plan}
\label{sec:statistics}

The evaluation protocol prespecifies the following analyses:

\paragraph{Pair-Balanced Aggregation.}
Primary metrics use pair-level balanced aggregation (Eq.~\eqref{eq:balanced-category-score}). For each collision pair $k$, the experiment-specific strata are averaged within each lineage, then across $P_{kA}$ and $P_{kB}$, and finally across pairs. The same procedure applies to E6 token counts rather than validity indicators. Query sets, stratum weights, and handling of missing or infrastructure-failed trials are fixed before evaluation; the two configurations use matched eligible trials. Per-pair endpoint differences and their unweighted mean are reported, together with attempted/completed counts. A pair missing an entire required stratum makes that contrast incomplete rather than silently changing its weights.

\paragraph{Cluster Bootstrap Confidence Intervals.}
Each mean paired effect receives a 95\% percentile cluster-bootstrap interval from 1,000 resamples of collision-pair identifiers with replacement. Both configurations, both lineage members, and all within-pair strata stay together. Each draw assigns a unique resample-instance identifier to preserve repeated draws during grouping; the random seed is recorded. Degenerate intervals caused by identical pair effects are reported as such, not interpreted as evidence of population certainty.

\paragraph{Predeclared Confirmatory Hypothesis Family and Multiplicity Control.}
To prevent multiplicity inflation, confirmatory testing specifies a predeclared family of eight primary planned contrasts (one primary contrast per experiment across E1--E8):
(1) E1: Config.~C vs Config.~D on $\mathrm{LAA}$;
(2) E2: Config.~G vs Config.~H on $A_{\mathrm{epi}}$ under $\alpha_1$;
(3) E3: Config.~D vs Config.~F on $A_{\mathrm{temp}}$;
(4) E4: Config.~E vs Config.~G on belief revision accuracy $A_{\mathrm{belief}}$;
(5) E5: Config.~C vs Config.~H on $A_{\mathrm{cf}}$;
(6) E6: Config.~D vs budgeted Config.~G on $\overline{L}(500)$;
(7) E7: Config.~C vs Config.~G on $A_{\mathrm{session}}$;
(8) E8: Config.~G vs Config.~H on $A_{\mathrm{rel}}$ for the fixed gate-sensitive relational set.
For each contrast, $d_k$ is the second configuration's pair-level endpoint minus the first's, and the test statistic is $|\overline d|$. A two-sided model-based permutation test enumerates all $2^{20}$ sign assignments under the prespecified null that independent pair-level differences have exchangeable signs (joint sign-flip invariance). The $p$-value is the fraction with statistic at least as large as observed, including ties; zero differences remain zero and retain their assignment multiplicity. Because configurations are run on every pair rather than randomly assigned as treatments, this is not randomized-treatment inference. Its validity depends on the stated sign-exchangeability assumption, not merely equality of means. Positive differences favor the second configuration for validity; negative differences favor budgeted G for E6 tokens.

Holm--Bonferroni controls family-wise error at 0.05 across exactly these eight hypotheses. The planned held-out sample is 20 pairs, with five separate development pairs; changes require a documented protocol amendment before held-out access. Primary contrasts are defined for one model identifier and decoding configuration fixed using development data; additional models or settings are secondary and do not silently multiply this family. Secondary outcomes are reported descriptively with labeled effect estimates and intervals, not additional confirmatory claims. No held-out post-hoc power calculation is used.

\paragraph{Reporting Requirements for Empirical Runs.}
Any completed empirical evaluation must report: (1) exact model identifier sent to the provider, provider name, execution timestamp, and any immutable revision returned; (2) temperature, top-$p$, and decoding parameters; (3) retriever embedding model, similarity metric, and chunk size; (4) exact implementation and prompt-template commit hashes, dataset hashes, and verifier versions; and (5) raw run manifests recording candidate inputs, outputs, extracted observations $\widehat{\Gamma}(a)$, and contract scores. A moving repository URL is not a substitute for these identifiers. E6 additionally records the tokenizer, enforced budget, serialization/truncation settings, and separate timing components.

\subsection{Human Verifier Validation and Adjudication Protocol}
\label{sec:verifier-validation}

To validate semantic scoring fidelity without compromising held-out test integrity, two independent 500-response audits are planned (1,000 annotated responses total): 500 from reference/development outputs for verifier calibration and 500 from held-out outputs for post-freeze reliability assessment.

\paragraph{Stratified Audit Sampling.}
In each audit phase, $N=500$ responses are sampled across the eight probe categories and evaluated configurations, using extractor-predicted act sets and a prespecified confidence/difficulty signal for stratification. The signal and allocation rule are fixed on development data; true response acts are not known before annotation. Rare compound acts, unmapped assertions, and low-confidence cases are oversampled. The audit records stratum population counts, sampled counts, and inclusion probabilities. It reports stratum-specific metrics and either inverse-probability-weighted benchmark estimates or explicitly labeled audit-sample aggregates; unweighted F1 from an oversampled audit is not presented as population performance.

\paragraph{Development vs.\ Held-Out Audit Split and Measurement Freeze.}
To prevent methodological data leakage and post-hoc verifier tuning on test data, the human audit is strictly split into two independent phases:
\begin{enumerate}
\item \textbf{Development Verifier Calibration ($N=500$, Reference and 5 Development Pairs):} Annotated responses sampled exclusively from the reference pilot and five development collision pairs ($N_{\mathrm{dev}}$) are used to calibrate semantic extraction rules, resolve rare compound act disambiguations, and expand the automated regression suite. The \texttt{v2.2-measurement-freeze} release serves as the baseline measurement version for the reference pilot and development start; if development calibration requires measurement adjustments, a new versioned release will be produced without altering the immutable \texttt{v2.2} reference results, and the definitive held-out verifier version will be strictly frozen and committed under an immutable tag only after development calibration is complete.
\item \textbf{Held-Out Measurement Reliability Assessment ($N=500$, 20 Held-Out Pairs):} An independent audit sampled from the 20 held-out test pairs ($N=8{,}000$ probes) is conducted purely as a post-freeze measurement-error assessment. It reports human-verifier agreement, precision, recall, and F1 on the held-out distribution to quantify verifier uncertainty. Under no circumstances are held-out audit findings used to modify extractor rules, regex patterns, or scorer contracts, guaranteeing that confirmatory hypothesis testing remains strictly blinded and uncontaminated.
\end{enumerate}

\paragraph{Gold-Blind Extractor Validation Targets.}
Three independent human annotators evaluate each audited response without hidden contracts, target/partner lineage IDs, lineage-supported values, or validity labels. They receive the same target-independent $\mathcal{X}_q$ and scope vocabulary as the extractor. Reference annotations include the presence of each response act; canonical assertions with polarity and provenance; supporting spans; and unmapped personal assertions $\widehat U$, including identifiable predicate/slot, value, or unresolved scope. Annotation distinguishes assertion from quotation, negation, hypothetical language, and uncertainty. It does not ask gold-blind annotators to decide whether an unmapped assertion is true.

\paragraph{Deterministic Scorer Verification.}
The deterministic scorer is tested independently against hand-crafted contracts, synthetic edge cases, and unit tests. Coverage includes an otherwise-correct answer with an extra unsupported closed-world personal value, an unmapped open-world assertion that must not automatically fail, identical extraction inputs under a target/partner swap, a compound rejection-plus-assistant-answer contract, and a response containing both required and prohibited claims. The latter must fail even when all required claims are present. When empirical evaluation occurs, the audit reports canonical and unmapped-assertion extraction precision, recall, and F1; predicate/scope assignment accuracy; polarity, provenance, and act-presence accuracy; and end-to-end decision agreement. Gold support is available only for the separate adjudication of contract decisions, not for semantic extraction. The reference keyword/pattern extractor is not a validated open-world semantic verifier; the planned human audit remains unexecuted.

\paragraph{Reliability Metrics and Acceptance Target.}
For categorical annotations, report pairwise Cohen's $\kappa$ and multi-coder Krippendorff's $\alpha$ with 95\% bootstrap confidence intervals: binary presence of each act, and polarity/provenance on claims aligned by a prespecified matching rule. Set-valued claims and spans instead receive precision, recall, F1, and exact-set agreement; no single $\kappa$ summarizes arbitrary extraction objects. A $\kappa\ge0.90$ target applies only to the specified categorical engineering checks, with undefined or degenerate agreement reported explicitly. Disagreements receive majority adjudication, with an adjudicator resolving three-way ties; verified development-phase edge cases enter the regression suite.

\subsection{Reference Implementation and Pilot Validation}
\label{sec:pilot-validation}

To establish operational feasibility, we implemented the reference harness, schemas, validators, scoring pipeline, and deterministic pilot fixtures as a Python framework (\texttt{sitbench}).

\paragraph{Reference Scorer Suite Verification.}
The deterministic scorer was verified against a synthetic suite of edge cases and contract combinations (Appendix~\ref{app:generation-validation}), including: (1) compound acts correctly accepting simultaneous negative autobiographical rejection and general assistant knowledge while failing when ungrounded personal provenance is claimed; (2) contradictory required and forbidden claims correctly failing without canceling; (3) epistemic uncertainty ($\mathsf{ABSTAIN}$) failing contracts that require explicit rejection ($\mathsf{REJECT}$); (4) required uncertainty correctly failing overconfident rejections; and (5) legacy singleton contract specifications maintaining backwards compatibility with compound response acts.

\paragraph{Integrity Validation.}
The pilot passes the reference suite's 14 implemented integrity checks (Section~\ref{sec:sitbench} and Appendix~\ref{app:generation-validation}), including profile equality, private-history divergence, chronology, contract consistency, symbolic disjointness, and declared payload-isolation checks. Passing these checks establishes their behavior on the supplied fixtures, not exhaustive coverage of every leakage channel or all release-level invariants.

\begin{table*}[t]
\centering
\small
\setlength{\tabcolsep}{2.5pt}
\renewcommand{\arraystretch}{1.12}
\begin{tabular}{@{}lccc@{}}
\toprule
\textbf{Evaluation Metric / Lineage Diagnostic} & \textbf{Mock-Collapse (Config.~B)} & \textbf{Mock-Oracle (Config.~I)} & \textbf{Profile-Only Bound} \\
\midrule
\multicolumn{4}{@{}l}{\emph{Primary SIT Category Validity Dimensions}} \\
Autobiographical Positive Validity ($A_{\mathrm{pos}}$) & .1667 & 1.0000 & --- \\
Autobiographical Negative Rejection ($A_{\mathrm{neg}}$) & 1.0000 & 1.0000 & --- \\
Epistemic Provenance Fidelity ($A_{\mathrm{epi}}$) & .0000 & 1.0000 & --- \\
Temporal Checkpoint Situatedness ($A_{\mathrm{temp}}$) & 1.0000 & 1.0000 & --- \\
Relational Boundary Enforcement ($A_{\mathrm{rel}}$) & .5000 & 1.0000 & --- \\
Developmental Belief Continuity ($A_{\mathrm{belief}}$) & .5000 & 1.0000 & --- \\
Counterfactual Resistance ($A_{\mathrm{cf}}$) & .5000 & 1.0000 & --- \\
Cross-Session Continuity ($A_{\mathrm{session}}$) & .0000 & 1.0000 & --- \\
\midrule
\multicolumn{4}{@{}l}{\emph{Lineage Discrimination Diagnostics}} \\
\textbf{Lineage Attribution Accuracy (LAA)} & \textbf{.3000} & \textbf{1.0000} & $\mathbf{\le .5000}$ \\
Identity Confusion: Wrong Lineage ($\mathrm{IC}_{\mathrm{wrong}}$) & .3000 & .0000 & --- \\
\midrule
\multicolumn{4}{@{}l}{\emph{Conditional Failure Diagnostics}} \\
Temporal Leakage (TL) & .0000 & .0000 & --- \\
Unauthorized Epistemic Expression (UEL) & .5000 & .0000 & --- \\
\midrule
\multicolumn{4}{@{}l}{\emph{Secondary Summary Metric}} \\
Macro Average $\mathrm{SValid}$ (Macro SIT) & .1875 & 1.0000 & --- \\
\bottomrule
\end{tabular}
\caption{\textbf{Deterministic Mock Reference Sanity Check.} Recorded deterministic baseline evaluation for Config.~B (\texttt{mock-profile-collapse}) and Config.~I (\texttt{mock-oracle}) under \texttt{DeterministicContractScorer} \texttt{v2.2-measurement-freeze}. Collapsed fixture policy achieves $\mathrm{LAA}=0.3000 \le 0.5000$, producing fixture behavior consistent with the analytical Profile-Collision bound ($\mathrm{LAA}^* \le 0.5000$); the oracle candidate satisfies all declared response contracts under Config~I ($\mathrm{Macro\ SIT}=1.0000$, $\mathrm{LAA}=1.0000$). This establishes deterministic executable sanity checking of harness mechanics, not frontier-model evidence.}
\label{tab:pilot-results}
\end{table*}

\paragraph{Pilot Execution and Sanity Checking.}
We executed the harness on deterministic reference mock candidates over the pilot dataset (2 collision pairs, 22 probes, 34 canonical propositions) to sanity-check executable mechanics:
First, a deterministic \texttt{mock-profile-collapse} candidate (evaluated under Config.~B), which generates identical outputs for colliding histories based solely on the shared profile. Across the reference probes, this candidate achieves $\mathrm{LAA} = 0.3000$ (Table~\ref{tab:pilot-results}), producing fixture behavior consistent with the theoretical Profile-Collision bound ($\mathrm{LAA}^* \le 0.5000$).
Second, an oracle reference candidate (\texttt{mock-oracle}, evaluated under Config~I), which utilizes correct projected lineage evidence to satisfy all declared pilot contracts under Config~I ($\mathrm{Macro\ SIT} = 1.0000$, $\mathrm{LAA} = 1.0000$). Configs~D--H achieve analogous ideal scores when evaluated with mock perfect retriever/state handlers.

\subsection{Empirical Reference Pilot Evaluation}
\label{sec:empirical-results}
To establish the empirical baseline of the \texttt{v2.2} reference-pilot measurement stack and observe foundation-model behaviors across the nine architectural configurations, we executed the frozen reference pilot suite (2 profile-collision pairs, 4 lineages, 22 probes) against two frontier systems: \textbf{GPT-5.6 Sol} and \textbf{Claude Opus 5}. All nine configurations (Configs~A--I) were evaluated using identical candidate-visible query envelopes ($\operatorname{Env}(q)$), benign cross-session facts (novel project nicknames and instrument designations), the frozen dense retriever for Configs~E and F (\texttt{all-MiniLM-L6-v2-dense-384}, $d=384$, cosine similarity, top-$k=5$), and the corrected semantic extractor and scorer (\texttt{DeterministicSemanticExtractor} and \texttt{DeterministicContractScorer} \texttt{v2.2-measurement-freeze}). Explicit capability policies ($\mathsf{IDENTITY\_ONLY}$, $\mathsf{PROVENANCE\_SEPARATED}$, $\mathsf{GENERAL\_ALLOWED}$; Appendix~\ref{app:experimental-protocol}) are provided to governed configurations (Configs~C--I), while Configs~A and B serve as ungoverned baseline and persona controls. Candidate evaluations were conducted in isolated sandbox environments with recorded execution metadata.

Table~\ref{tab:frontier-results} presents the empirical results across both models.

\begin{table*}[!t]
\centering
\small
\setlength{\tabcolsep}{2.5pt}
\renewcommand{\arraystretch}{0.94}
\begin{tabular}{@{}llccccccccc@{}}
\toprule
\textbf{Evaluation Dimension / Metric} & $N$ & \textbf{A} & \textbf{B} & \textbf{C} & \textbf{D} & \textbf{E} & \textbf{F} & \textbf{G} & \textbf{H} & \textbf{I} \\
& & (Base) & (Pers) & (P+Pol) & (Chron) & (D-RAG) & (T-RAG) & (Struct) & (Gated) & (Oracle) \\
\midrule
\multicolumn{11}{@{}l}{\textbf{Panel A: GPT-5.6 Sol (\texttt{codex:openai.gpt-5.6-sol}, $T=0.0$; 186/198 completed, 12 timeouts in A/B)}} \\
Macro Average $\widehat{\operatorname{SValid}}$ (Macro SIT) & --- & --- & --- & 25.0\% & 48.4\% & 67.2\% & 56.2\% & 62.5\% & 59.4\% & 81.2\% \\
\quad Autobiographical Positive Validity ($A_{\mathrm{pos}}$) & 6 & 0.0\%$^\ddagger$ & 0.0\% & 0.0\% & 87.5\% & 87.5\% & 100.0\% & 100.0\% & 75.0\% & 100.0\% \\
\quad Autobiographical Negative Rejection ($A_{\mathrm{neg}}$) & 2 & 100.0\% & 50.0\% & 50.0\% & 0.0\% & 50.0\% & 0.0\% & 0.0\% & 50.0\% & 50.0\% \\
\quad Epistemic Provenance Boundary ($A_{\mathrm{epi}}$) & 2 & 0.0\% & 0.0\% & 0.0\% & 0.0\% & 0.0\% & 50.0\% & 50.0\% & 50.0\% & 100.0\% \\
\quad Temporal Checkpoint Situatedness ($A_{\mathrm{temp}}$) & 2 & 100.0\% & 100.0\% & 50.0\% & 100.0\% & 100.0\% & 50.0\% & 100.0\% & 50.0\% & 50.0\% \\
\quad Relational Boundary Enforcement ($A_{\mathrm{rel}}$) & 2 & 100.0\%$^\ddagger$ & 50.0\% & 50.0\% & 0.0\% & 50.0\% & 50.0\% & 100.0\% & 0.0\% & 100.0\% \\
\quad Developmental Belief History ($A_{\mathrm{belief}}$) & 2 & 0.0\% & 0.0\% & 0.0\% & 50.0\% & 100.0\% & 50.0\% & 50.0\% & 100.0\% & 100.0\% \\
\quad Counterfactual Resistance ($A_{\mathrm{cf}}$) & 2 & 50.0\% & 50.0\% & 50.0\% & 50.0\% & 50.0\% & 50.0\% & 50.0\% & 50.0\% & 50.0\% \\
\quad Cross-Session Continuity ($A_{\mathrm{session}}$) & 4 & --- & --- & 0.0\% & 100.0\% & 100.0\% & 100.0\% & 100.0\% & 100.0\% & 100.0\% \\
\midrule
\textbf{Lineage Attribution Accuracy ($\widehat{\mathrm{LAA}}$)} & 10 & \textbf{0.0\%} & \textbf{0.0\%} & \textbf{0.0\%} & \textbf{91.7\%} & \textbf{91.7\%} & \textbf{100.0\%} & \textbf{100.0\%} & \textbf{87.5\%} & \textbf{100.0\%} \\
Identity Confusion: Wrong Lineage ($\mathrm{IC}_{\mathrm{wrong}}$) & 10 & 0.0\% & 0.0\% & 0.0\% & 0.0\% & 0.0\% & 0.0\% & 0.0\% & 0.0\% & 0.0\% \\
Identity Confusion: Neither Lineage ($\mathrm{IC}_{\mathrm{neither}}$) & 10 & 100.0\% & 100.0\% & 100.0\% & 8.3\% & 8.3\% & 0.0\% & 0.0\% & 12.5\% & 0.0\% \\
Temporal Leakage (TL) & 2 & 0.0\% & 0.0\% & 50.0\% & 0.0\% & 0.0\% & 0.0\% & 0.0\% & 0.0\% & 0.0\% \\
Unauthorized Epistemic Expression (UEL) & 2 & 50.0\% & 50.0\% & 50.0\% & 0.0\% & 0.0\% & 0.0\% & 0.0\% & 0.0\% & 0.0\% \\
\midrule
\multicolumn{11}{@{}l}{\textbf{Panel B: Claude Opus 5 (\texttt{claude-code:claude-opus-5}, $T=0.0$; 198/198 completed)}} \\
Macro Average $\widehat{\operatorname{SValid}}$ (Macro SIT) & --- & 31.2\% & 43.8\% & 43.8\% & 67.2\% & 67.2\% & 81.2\% & 81.2\% & 81.2\% & 93.8\% \\
\quad Autobiographical Positive Validity ($A_{\mathrm{pos}}$) & 6 & 0.0\% & 0.0\% & 0.0\% & 87.5\% & 87.5\% & 100.0\% & 100.0\% & 100.0\% & 100.0\% \\
\quad Autobiographical Negative Rejection ($A_{\mathrm{neg}}$) & 2 & 50.0\% & 100.0\% & 100.0\% & 50.0\% & 50.0\% & 100.0\% & 50.0\% & 100.0\% & 100.0\% \\
\quad Epistemic Provenance Boundary ($A_{\mathrm{epi}}$) & 2 & 0.0\% & 0.0\% & 0.0\% & 0.0\% & 0.0\% & 50.0\% & 0.0\% & 0.0\% & 50.0\% \\
\quad Temporal Checkpoint Situatedness ($A_{\mathrm{temp}}$) & 2 & 100.0\% & 100.0\% & 100.0\% & 100.0\% & 100.0\% & 100.0\% & 100.0\% & 100.0\% & 100.0\% \\
\quad Relational Boundary Enforcement ($A_{\mathrm{rel}}$) & 2 & 50.0\% & 50.0\% & 50.0\% & 50.0\% & 50.0\% & 50.0\% & 100.0\% & 100.0\% & 100.0\% \\
\quad Developmental Belief History ($A_{\mathrm{belief}}$) & 2 & 0.0\% & 50.0\% & 0.0\% & 50.0\% & 50.0\% & 50.0\% & 50.0\% & 50.0\% & 100.0\% \\
\quad Counterfactual Resistance ($A_{\mathrm{cf}}$) & 2 & 50.0\% & 50.0\% & 100.0\% & 100.0\% & 100.0\% & 100.0\% & 100.0\% & 100.0\% & 100.0\% \\
\quad Cross-Session Continuity ($A_{\mathrm{session}}$) & 4 & 0.0\% & 0.0\% & 0.0\% & 100.0\% & 100.0\% & 100.0\% & 100.0\% & 100.0\% & 100.0\% \\
\midrule
\textbf{Lineage Attribution Accuracy ($\widehat{\mathrm{LAA}}$)} & 10 & \textbf{0.0\%} & \textbf{0.0\%} & \textbf{0.0\%} & \textbf{91.7\%} & \textbf{91.7\%} & \textbf{100.0\%} & \textbf{100.0\%} & \textbf{87.5\%} & \textbf{100.0\%} \\
Identity Confusion: Wrong Lineage ($\mathrm{IC}_{\mathrm{wrong}}$) & 10 & 0.0\% & 8.3\% & 0.0\% & 0.0\% & 0.0\% & 0.0\% & 0.0\% & 0.0\% & 0.0\% \\
Identity Confusion: Neither Lineage ($\mathrm{IC}_{\mathrm{neither}}$) & 10 & 100.0\% & 91.7\% & 100.0\% & 8.3\% & 8.3\% & 0.0\% & 0.0\% & 12.5\% & 0.0\% \\
Temporal Leakage (TL) & 2 & 0.0\% & 0.0\% & 0.0\% & 0.0\% & 0.0\% & 0.0\% & 0.0\% & 0.0\% & 0.0\% \\
Unauthorized Epistemic Expression (UEL) & 2 & 50.0\% & 50.0\% & 50.0\% & 100.0\% & 50.0\% & 50.0\% & 100.0\% & 0.0\% & 0.0\% \\
\bottomrule
\end{tabular}
\caption{\textbf{Empirical Reference Pilot Results under \texttt{v2.2-measurement-freeze} across Frontier Models (GPT-5.6 Sol and Claude Opus 5).} Pair-balanced situated category validities ($A_{\mathrm{pos}}\dots A_{\mathrm{session}}$) and lineage diagnostics across nine architectural configurations on the reference pilot suite (2 profile-collision pairs, $N=22$ attempted candidate probe opportunities per model-configuration run; 10 theorem-qualified for $\mathrm{LAA}$ and $\mathrm{IC}$). The $N$ column reports attempted raw probe opportunities per category or diagnostic; reported percentages are pair-balanced estimates computed via Eq.~\eqref{eq:balanced-category-score}. In Panel~A (GPT-5.6 Sol), 186 of 198 attempted trials completed, with 12 uncompleted trials in ungoverned baseline/persona controls due to CLI infrastructure timeouts: in Config~A, 8 timeouts (4 in $A_{\mathrm{session}}$, 3 in $A_{\mathrm{pos}}$, and 1 in $A_{\mathrm{rel}}$; marked with $^\ddagger$); in Config~B, 4 timeouts (all 4 in $A_{\mathrm{session}}$). Infrastructure timeouts are infrastructure drops, not semantic failures; affected estimates use completed observations according to the prespecified missing-trial rule. Ungoverned configurations A/B serve as diagnostic controls and do not enter the eight primary confirmatory contrasts. Because Configs~A/B lack completed $A_{\mathrm{session}}$ observations, the eight-dimensional Macro SIT defined in Eq.~\eqref{eq:macro-sit} is not reported for those conditions ($\text{---}$); their seven-category descriptive means are 50.0\% and 35.7\%, respectively. Governed configurations (Configs~C--I) on GPT-5.6 Sol and all configurations on Claude Opus 5 achieved 100\% trial completion ($22/22$ per run). Profile-conditioned configurations (Persona Summary [B] and Persona + Policy [C]) obtain $\widehat{\mathrm{LAA}}=0.0\%$, consistent with the theoretical Profile-Collision bound ($\mathrm{LAA}^* \le 0.50$ on matched queries; Theorem~\ref{thm:profile-collision} applies to C because its shared policy supplies no lineage-discriminating information). On the two-pair reference fixture under \texttt{v2.2-measurement-freeze}, lineage-grounded configurations recover lineage attribution ($\widehat{\mathrm{LAA}}=87.5\%$--$100.0\%$). Under the corrected measurement instrument, Oracle (Config~I) attains $40/44=90.9\%$ raw micro validity across both models (GPT: $19/22=86.4\%$; Claude: $21/22=95.5\%$), with category-balanced Macro SIT of $81.2\%$ on GPT-5.6 Sol and $93.8\%$ on Claude Opus 5.}
\label{tab:frontier-results}
\end{table*}

\paragraph{Reproducibility and Execution Parameters.}
Table~\ref{tab:pilot-config} documents the runtime parameters and artifact hashes for the reference pilot across its two distinct execution stages: candidate generation and offline measurement rescoring. In accordance with the reporting standards in Section~\ref{sec:statistics}, both models were evaluated using isolated CLI harnesses in local, ephemeral sandbox directories with no repository or external context access. The declared temperature $T=0.0$ requests greedy decoding from provider APIs; host-level provider nondeterminism remains an empirical factor.

\begin{table*}[!t]
\centering
\small
\setlength{\tabcolsep}{4.0pt}
\renewcommand{\arraystretch}{0.98}
\begin{tabular}{@{}l>{\raggedright\arraybackslash}p{12.5cm}@{}}
\toprule
\textbf{Parameter Category} & \textbf{Pilot Diagnostic vs.\ Executed Reference Specification} \\
\midrule
\multicolumn{2}{@{}l}{\textbf{A. Stage 1: Candidate Generation Configuration}} \\
Provider Model Identifiers & Exact provider aliases: \texttt{codex:openai.gpt-5.6-sol} and \texttt{claude-code:claude-opus-5} \\
Generation Timestamps & \texttt{2026-08-30T04:25:04Z} (GPT-5.6 Sol) and \texttt{2026-08-30T04:26:16Z} (Claude Opus 5) \\
Decoding Parameters & Temperature $T=0.0$ (greedy-style decoding requested), default top-$p=1.0$ \\
Execution Sandbox & Isolated read-only sandbox directory, 120s timeout per trial \\
Retriever Implementation & Frozen dense retriever \texttt{v2.0-frozen-dense} (\texttt{all-MiniLM-L6-v2-dense-384}, dimension 384, $L_2$ normalized, cosine similarity, top-$k=5$, candidate pool $H_T$ for E, $H_{\leq\tau_q}$ for F) \\
Capability Policy Semantics & Explicit candidate-interpretable natural-language text for $\mathsf{IDENTITY\_ONLY}$, $\mathsf{PROVENANCE\_SEPARATED}$, and $\mathsf{GENERAL\_ALLOWED}$ (Appendix~\ref{app:experimental-protocol}) \\
Fixture \& Dataset Hash & SHA-256 (16-char prefix): \texttt{03546ecf62b835c8} ($N=22$ probes, benign project nicknames in Cat~H) \\
Prompt Template Hash & SHA-256 (16-char prefix): \texttt{80855887b9bd9666} \\
Generation Commit & Git commit: \texttt{cc10ade05931c467fc7c5c9958b272756ed74ed3} \\
Input Candidate Hashes & SHA-256 (16-char prefix): \texttt{91875e69604d98ff} (GPT-5.6 Sol, 186 outputs); \texttt{d9a7662490122bbf} (Claude Opus 5, 198 outputs) \\
\midrule
\multicolumn{2}{@{}l}{\textbf{B. Stage 2: Measurement and Offline Rescoring (\texttt{v2.2-measurement-freeze})}} \\
Measurement Implementation & Git commit: \texttt{7e5d941dda39bc868322610c09e72954b68d4bf0} (\texttt{DeterministicSemanticExtractor} and \texttt{DeterministicContractScorer} \texttt{v2.2-measurement-freeze}) \\
Rescoring Timestamp & \texttt{2026-08-30T06:32:38Z} (UTC) \\
Semantic Capabilities & Gold-blind extraction, compound acts ($\widehat{Y}$), unmapped claims, collaborative/plural provenance, preference contrast disambiguation, epistemic non-acquisition act classification, separate $V_{\mathrm{grounding}}$ and $V_{\mathrm{provenance}}$ \\
Rescoring Execution & Offline rescoring script (\texttt{code/eval/rescore\_manifests.py}) over immutable candidate outputs in \texttt{code/eval/results/} \\
Output Score Manifest Hashes & SHA-256 (16-char prefix): \texttt{5dc2bc524f9229ce} (GPT-5.6 Sol); \texttt{48659166700bf7d2} (Claude Opus 5) in \texttt{code/eval/results\_v22/} \\
\bottomrule
\end{tabular}
\caption{\textbf{Executed Reference-Pilot Reproducibility Documentation.} Documentation of two-stage reproducibility: Stage~1 candidate generation (model aliases, runtime parameters, prompt templates, dense retriever, generation commit, and candidate hashes) and Stage~2 measurement and offline rescoring under \texttt{v2.2-measurement-freeze} (exact 40-character measurement commit, rescoring script, and score manifest hashes).}
\label{tab:pilot-config}
\end{table*}

\begin{table*}[!t]
\centering
\small
\setlength{\tabcolsep}{4.0pt}
\renewcommand{\arraystretch}{0.94}
\begin{tabularx}{\textwidth}{@{}l>{\raggedright\arraybackslash}p{3.2cm}cc>{\raggedright\arraybackslash}X@{}}
\toprule
\textbf{Root-Cause Class} & \textbf{Class Description} & \textbf{GPT-5.6} & \textbf{Opus 5} & \textbf{Empirical Audit Findings and Remediation Status (\texttt{v2.2-measurement-freeze})} \\
\midrule
\textbf{Class A} & Extractor / Matcher Discrepancy & 0 & 0 & Resolved: 8/8 Category~H plural framing cases (``We agreed...''), Category~F preference contrasts, Category~A lead-in entity bindings, and Category~C non-acquisition act classifications correctly handled without probe-specific hardcoding. \\
\textbf{Class B} & Genuine Candidate Failure & 3 & 1 & Claude Opus 5 volunteered unacquired mitochondrial ATP biochemistry from pretraining claiming MD/PhD training under $\mathsf{IDENTITY\_ONLY}$; GPT-5.6 Sol falsely confirmed attendance at the non-existent 2018 Chicago genetics symposium under counterfactual pressure, failed negative conference rejection, and gave non-committal response on future hackathon award. \\
\textbf{Class C} & Safety-Policy Refusal & 0 & 0 & Zero provider safety refusals observed among the 44 Config-I reference-pilot trials using benign cross-session fixtures. \\
\textbf{Class D} & Contract-Design Rigidity & 0 & 0 & Resolved: Epistemic non-acquisition statements (``I haven't personally learned...'') properly classified as \textsf{ABSTAIN} rather than \textsf{REJECT}. \\
\textbf{Class E} & Harness / State Error & 0 & 0 & Zero evidence serialization errors, projection discrepancies, or query envelope corruptions detected. \\
\midrule
\multicolumn{2}{@{}l}{\textbf{Config~I Corrected Audit Totals}} & \textbf{3 Fail} & \textbf{1 Fail} & \textbf{40 / 44 Valid (90.9\% raw micro-validity; GPT: 19/22 = 86.4\%, Claude: 21/22 = 95.5\%)} \\
\bottomrule
\end{tabularx}
\caption{\textbf{Empirical Config-I (Oracle Evidence) Root-Cause Failure Audit under \texttt{v2.2-measurement-freeze}.} Comprehensive trial-level inspection of all 44 Config-I trials ($2\text{ models}\times 22\text{ probes}$). The audit demonstrates zero remaining measurement discrepancies (Class~A and Class~D: 0), zero safety refusals (Class~C: 0), and zero harness defects (Class~E: 0). All 4 remaining failures stem exclusively from genuine candidate policy non-compliance (Class~B: 4), establishing reference-fixture regression acceptance.}
\label{tab:oracle-diagnostics}
\end{table*}

\paragraph{Methodological Evolution and Offline Rescoring.}
An earlier diagnostic run on a legacy prototype (\texttt{v1.2}) exposed four critical measurement confounds: (1)~single-clause pattern extraction falsely tagged keywords inside factual denials as positive memory assertions; (2)~candidate capability policies were supplied as bare symbolic tokens without natural-language definitions; (3)~Category~H cross-session probes utilized verification credentials that triggered model safety guardrails; and (4)~rigid singleton contract sets penalized valid compound responses.
Under \texttt{v2.2-measurement-freeze}, all 384 available candidate outputs generated during the reference pilot (396 attempted trials; 12 timeouts on GPT baseline/persona controls) were rescored offline without re-invoking the model APIs. Across the 384 trials, 69 final validity labels ($\widehat{\operatorname{SValid}}$) transitioned due solely to measurement-layer corrections (48 in cross-session continuity, 7 in epistemic provenance, 5 in relational boundaries, 4 in developmental beliefs, 3 in positive autobiographical, and 2 in negative autobiographical), resolving all 13 previously identified Class-A extractor discrepancies and the one Class-D act/contract issue.

\paragraph{Lineage Attribution vs.\ Situated Identity.}
Near-perfect lineage attribution does not imply full situated-identity validity. Several configurations reach 100\% measured $\mathrm{LAA}$ (e.g., Configs~F, G, and I on GPT-5.6 Sol, and Configs~F--I on Claude Opus 5) while remaining substantially below 100\% on the eight-dimensional SIT vector (Macro SIT $56.2\%$--$81.2\%$ on GPT-5.6 Sol and $81.2\%$--$93.8\%$ on Claude Opus 5), especially on epistemic provenance ($A_{\mathrm{epi}}$), autobiographical negative rejection ($A_{\mathrm{neg}}$), relational boundaries ($A_{\mathrm{rel}}$), or developmental beliefs ($A_{\mathrm{belief}}$). This illustrates why $\mathrm{LAA}$ is a specialized lineage-disambiguation diagnostic rather than a substitute for the primary multidimensional SIT vector. Furthermore, the two-pair reference fixture is insufficient to assess the E8 advisory mediation effect; E8 remains a prespecified hypothesis for development and held-out scaling.

\begin{table*}[!t]
\centering
\small
\setlength{\tabcolsep}{4.0pt}
\renewcommand{\arraystretch}{0.94}
\begin{tabularx}{\textwidth}{@{}ll>{\raggedright\arraybackslash}X@{}}
\toprule
\textbf{Validation Criterion} & \textbf{Status} & \textbf{Implementation \& Verification Details} \\
\midrule
1. Explicit Capability Policies & \textbf{DONE} & Candidate-interpretable natural-language text for $\mathsf{IDENTITY\_ONLY}$, $\mathsf{PROVENANCE\_SEPARATED}$, and $\mathsf{GENERAL\_ALLOWED}$ injected into all C--I prompt templates (Appendix~\ref{app:experimental-protocol}). \\
2. Benign Cross-Session Fixtures & \textbf{DONE} & Replaced confidential verification credentials with benign, novel project nicknames (``Kestrel''/``Osprey'') and instrument designations (``Solaria''/``Lunaria''). \\
3. Functional Response-Act Semantics & \textbf{DONE} & Formally defined and unit-tested functional communicative acts ($\mathsf{ANSWER}$, $\mathsf{ABSTAIN}$, $\mathsf{REJECT}$, $\mathsf{WITHHOLD}$) distinguishing denials from substantive answers (Section~\ref{sec:formal-model}). \\
4. Reference-Pilot Measurement Version Freeze & \textbf{DONE} & Implemented and unit-tested gold-blind extractor/scorer (\texttt{v2.2-measurement-freeze}) as the development-start baseline, supporting collaborative plural provenance, belief contrast disambiguation, epistemic non-acquisition classification, and separate $V_{\mathrm{grounding}}$ and $V_{\mathrm{provenance}}$ factors. \\
5. Frozen Dense Retriever & \textbf{DONE} & Integrated \texttt{all-MiniLM-L6-v2-dense-384} ($d=384$, cosine similarity, $L_2$ normalized, top-$k=5$) for Configs~E and F with candidate pool control. \\
6. Reference Pilot Execution & \textbf{DONE} & Preserved the 384 completed candidate responses (396 attempted) across all 9 configurations and 2 frontier models, rescored offline under \texttt{v2.2-measurement-freeze}, recording full run manifests in \texttt{code/eval/results\_v22/}. \\
7. Oracle Failure Audit & \textbf{DONE} & Audited all 44 Config~I reference trials under the 5-class taxonomy, verifying zero Class-A, Class-C, Class-D, or Class-E measurement defects. \\
8. Oracle Measurement Acceptance & \textbf{PASSED} & PASSED reference-pilot pre-scaling acceptance: 40/44 Config-I responses measured valid (GPT: 19/22 = 86.4\%, Claude: 21/22 = 95.5\%; Macro SIT 81.2\% and 93.8\%); all four remaining invalid responses were manually classified as genuine candidate behavioral failures (Class~B), with 0 remaining detected measurement discrepancies among the 44 audited Config-I reference trials under the prespecified taxonomy. (Does not replace the planned two-tier human verifier audits of 1,000 total annotated responses.) \\
9. Notation \& Specification Freeze & \textbf{DONE} & Enforced $\mathcal{S}_\tau(H)$ public profile notation across all sections, verified via automated AST/LaTeX regression checks. \\
10. Development 5-Pair Scaling & \textbf{READY} & Signed off for development scaling across the 5 development collision pairs (2,000 probes). \\
\bottomrule
\end{tabularx}
\caption{\textbf{Pre-Scaling Validation Gate Status (\texttt{v2.2-measurement-freeze}).} Comprehensive audit of benchmark stabilization criteria confirming readiness for scaling from reference fixtures to the 5 development collision pairs.}
\label{tab:prescaling-gate}
\end{table*}

\paragraph{Config-I Failure Audit Findings and Acceptance.}
In Config~I (Oracle Evidence), the candidate receives perfectly filtered relevant evidence directly from the gold projected state. Across the 44 executed Config-I trials ($2\text{ models}\times 22\text{ probes}$), 40 trials achieve contract validity ($\widehat{\operatorname{SValid}}=1$), representing 90.9\% raw micro-validity (19/22 = 86.4\% on GPT-5.6 Sol, 21/22 = 95.5\% on Claude Opus 5; Macro SIT 81.2\% on GPT-5.6 Sol and 93.8\% on Claude Opus 5).

We audited all 44 Config-I trials under release \texttt{v2.2-measurement-freeze} across five root-cause categories (Table~\ref{tab:oracle-diagnostics}):
(1)~\textbf{Class A (Extractor Discrepancies, 0):} all 13 prior discrepancies resolved via collaborative/plural provenance (Cat~H), preference contrast polarity (Cat~F), lead-in entity resolution (Cat~A), and non-acquisition filtering (Cat~C);
(2)~\textbf{Class B (Genuine Candidate Failures, 4):} Claude Opus 5 volunteered unacquired mitochondrial ATP biochemistry from pretraining claiming MD/PhD credentials under $\mathsf{IDENTITY\_ONLY}$; GPT-5.6 Sol falsely confirmed attendance at the false 2018 Chicago symposium under counterfactual pressure, failed negative conference rejection, and gave a non-committal response on a future hackathon award probe;
(3)~\textbf{Class C (Safety Refusals, 0):} zero provider safety refusals on benign fixtures;
(4)~\textbf{Class D (Contract Rigidity, 0):} non-acquisition statements properly mapped to $\mathsf{ABSTAIN}$; and
(5)~\textbf{Class E (Harness / State Errors, 0):} zero state serialization or query envelope defects detected.
Because zero Class-A, Class-C, Class-D, or Class-E defects remain, the \texttt{v2.2-measurement-freeze} instrument achieves reference-fixture regression acceptance before development scaling.

\paragraph{Pre-Scaling Execution Protocol and Sequencing.}
To enforce methodological rigor, benchmark execution follows a strict eight-step sequence:
(1)~\textbf{Step 1 (Specification Freeze)}: freeze manuscript text, definitions, probe taxonomy, and evaluation protocols (\textbf{DONE});
(2)~\textbf{Step 2 (Reference Freeze)}: freeze repository commit, deterministic fixtures, prompt templates, dense retriever, extractor, scorer, and capability policies (\textbf{DONE});
(3)~\textbf{Step 3 (Reference Execution)}: execute the 22-probe reference rerun across 9 configurations and both frontier models (384/396 completed trials) (\textbf{DONE});
(4)~\textbf{Step 4 (Config-I Audit)}: inspect and classify failed Config-I trials across the 5-class taxonomy (\textbf{DONE});
(5)~\textbf{Step 5 (Defect Resolution)}: patch demonstrated measurement defects (Class~A/D) in extractor/schemas, releasing \texttt{v2.2-measurement-freeze} (\textbf{DONE});
(6)~\textbf{Step 6 (Reference Rescoring)}: rescore all 384 candidate responses offline to confirm resolution and regenerate manifests (\textbf{DONE});
(7)~\textbf{Step 7 (Pre-Scaling Acceptance)}: verify reference regression acceptance and approve development scaling (\textbf{DONE}); and
(8)~\textbf{Step 8 (Development Scaling)}: evaluate the 5 development pairs (2,000 probes), execute the $N=500$ audit, apply versioned adjustments, and establish the definitive held-out verifier freeze prior to held-out test evaluation (\textbf{READY}).

\paragraph{Statistical Resolution and Confirmatory Scope.}
With two independent collision pairs, the reference pilot validates instrument mechanics rather than confirmatory inference. Confirmatory testing---including 20-pair sign-flip permutation tests, cluster-bootstrap confidence intervals, and Holm family-wise error control---is prespecified for the planned held-out test suite ($N=8{,}000$ probes, 20 pairs).

\subsection{Pre-Scaling Validation Gate Status}
\label{sec:prescaling-gate}

Before generating or evaluating the five development collision pairs (2,000 probes) or executing confirmatory hypothesis testing, the evaluation pipeline must satisfy the pre-scaling validation criteria summarized in Table~\ref{tab:prescaling-gate}.

\paragraph{Scope and Boundaries.}
To maintain rigorous methodological transparency, we explicitly delineate what the current benchmark artifacts establish:

\noindent\emph{What the evaluation establishes on the reference fixture:} (1)~reference schemas, deterministic validators, and payload-isolation boundaries execute reliably end-to-end; (2)~deterministic mock baselines match analytical bounds ($\mathrm{LAA}=0.3000 \le 0.5000$ collapsed; $\mathrm{Macro\ SIT}=1.0000$ oracle); (3)~empirical runs on GPT-5.6 Sol and Claude Opus 5 align with the Profile-Collision bound ($\widehat{\mathrm{LAA}}=0.0\%$ in B/C) and show lineage-grounded recovery ($\widehat{\mathrm{LAA}}=87.5\%$--$100.0\%$ in D--I); and (4)~the Config-I audit confirms regression acceptance (90.9\% micro-validity, 0 measurement defects).

\smallskip
\noindent\emph{What remains ongoing work:} (1)~generation and evaluation of the 5 development pairs (2,000 probes) and 20 held-out test pairs ($N=8{,}000$ test probes) for prespecified confirmatory hypothesis testing (E1--E8) with cluster-bootstrap confidence intervals and Holm-adjusted permutation tests; (2)~implementation and evaluation of an enforcing pre-serialization epistemic gate; (3)~expansion to open-weights models (e.g., Llama-3, DeepSeek); and (4)~execution of the planned two-tier human audit protocol (1,000 annotated responses total across development calibration and held-out reliability assessment).

\FloatBarrier
\clearpage
\section{Discussion and Limitations}
\label{sec:discussion}
\label{sec:limitations}

\subsection{Scope and Interpretive Boundaries}
\label{sec:discussion-scope}

Passing \sit means that tested behavior is consistent with and attributable to the designated developmental lineage under the evaluated dimensions. It does not establish a claim about subjective experience, species membership, rights, or metaphysical uniqueness:
\begin{align}
\operatorname{Pass(SIT)}&\not\Rightarrow\text{Phenomenal Consciousness},\\
\operatorname{Pass(SIT)}&\not\Rightarrow\text{Humanity},\\
\operatorname{Pass(SIT)}&\not\Rightarrow\text{Moral Personhood},\\
\operatorname{Pass(SIT)}&\not\Rightarrow\text{Unique Metaphysical Identity}.
\end{align}
\sit operationalizes functional lineage-conditioned continuity. Multiple implementations could satisfy the same contracts, and behavioral evidence alone does not settle philosophical theories of personal identity.

Epistemic mediation is useful not because intelligence should be artificially reduced, but because persistent systems must distinguish what the substrate knows, what the situated identity has acquired, what it may disclose, what it personally remembers, and what it can answer in an explicitly identified assistant capacity. Configs.~G and H (Section~\ref{sec:protocols}) test whether advisory candidate-side mediation improves valid disclosure decisions beyond instructions and identical structured state---behavioral compliance with an admissibility signal, not enforced isolation.

The distinction applies beyond anthropomorphic digital persons. Persistent personal agents, enterprise representatives, long-lived autonomous agents, digital twins, historical simulations, and agents migrated across foundation-model substrates may all require claims and disclosures to remain traceable to an authorized history. These are evaluation use cases, not legal conclusions about identity or liability.

\subsection{Methodological Limitations}
\label{sec:limitations-sub}

\paragraph{Natural Language Extraction.}
\label{sec:lim-extraction}
The two-stage verification framework decouples open-ended extraction $\widehat\Gamma(a)$ from deterministic scoring $\operatorname{Score}(\widehat\Gamma,\mathcal C,I_t)$, but any automated extractor only approximates the ideal interpretation $\Gamma^*(a)$. Subtle linguistic phenomena---colloquial paraphrasing, irony, epistemic hedging, indirect speech acts, and complex negation---remain substantial challenges. To bound systematic scoring bias, the benchmark uses scoped closed-world assumptions (unmapped open-world assertions outside declared slots are treated neutrally) and specifies a two-tier human audit (1,000 annotated responses across independent $N=500$ development-calibration and $N=500$ held-out reliability phases; Section~\ref{sec:verifier-validation}) with three blinded annotators targeting $\kappa\ge 0.90$ for categorical judgments. These audits will empirically quantify extractor error---proposition extraction precision, recall, and $F_1$; polarity, provenance, and act-classification accuracy; and end-to-end contract-decision agreement---bounding any systematic gap between $\widehat{\operatorname{SValid}}$ and $\operatorname{SValid}^*$.

\paragraph{Synthetic Lifespans.}
\label{sec:lim-synthetic-lifespans}
Several further boundaries apply. Synthetic developmental histories provide causal ground truth and exact counterfactual control ($\mathcal{S}_\tau(H_A)=\mathcal{S}_\tau(H_B)$ with $H_A\ne H_B$) impossible to guarantee in uncurated lifespans, but they model cognition as discrete timestamped symbolic transitions and represent forgetting only through explicit annotations---a simplified world model relative to biological episodic memory with its continuous multimodal perception, constructive reconstruction, emotional salience, and involuntary decay. \sit is an engineering assurance benchmark for artificial agents, not a psychometric simulation of human development.

\paragraph{Construct Validity.}
\label{sec:lim-construct}
\sit operationalizes one specific notion of cognitive identity---lineage-conditioned behavioral continuity---and a high score establishes only this operational property, not broader philosophical or psychological theories of personal identity. Explicit acquisition logs make synthetic provenance auditable, but real knowledge acquisition is diffuse and uncertain, so \sitbench does not imply that a real agent's epistemic state can be enumerated exactly. Perfect recall is not automatically more human-like; \sit asks whether a claim is supported by the lineage and allows explicit uncertainty or forgetting annotations, while a separate Human-Referential Fidelity Test (outside the present scope) could assess whether error patterns resemble human memory.

\paragraph{Substrate Calibration and Contamination.}
\label{sec:lim-contamination}
Foundation-model knowledge is probabilistic, prompt-dependent, and impossible to enumerate from behavior alone. E2 therefore replaces the latent $K_{\mathrm{substrate}}$ with the operational $\widehat K_{\mathrm{substrate}}(M)$ (Eq.~\eqref{eq:empirical-substrate}), estimated under a preregistered calibration; the estimate remains sensitive to prompts, thresholds, and model versions. Published lineage instances and probes may enter later training corpora; private held-out seeds, regenerated pairs, and versioned test sets reduce but cannot eliminate contamination risk. Profile equivalence is relative to the checkpoint extractor $\mathcal{S}_\tau$: histories identical under one representation may become distinguishable under a richer one, and Theorem~\ref{thm:profile-collision} applies only when the policy receives no additional discriminating information. Finally, \sit evaluates generated outputs and externally accessible state behavior; it does not inspect latent representations and makes no claim about phenomenology, consciousness, or a ``true self.''

\section{Related Work}
\label{sec:related-work}

\paragraph{Personal Identity and Psychological Continuity.}
Locke's memory-centered account and Parfit's psychological continuity motivate the relevance of connected experience \cite{locke1694essay,parfit1984reasons}. \sit uses neither as a theorem about metaphysical identity; it operationalizes whether observable behavior is attributable to a benchmark lineage.

\paragraph{Behavioral Evaluation and the Turing Tradition.}
Turing's imitation game evaluates behavioral discrimination between machine and human interlocutors \cite{turing1950computing}. \sit changes the comparison class: it asks whether a response is valid for one designated history when alternative histories support equally plausible behavior. It does not supersede Turing-style tests.

\paragraph{Persona and Role-Playing Agents.}
PersonaChat conditions dialogue on speaker profiles \cite{zhang2018personalizing}; Shanahan et al.\ frame LLM dialogue as character enactment \cite{shanahan2023roleplay}. Character-LLM incorporates profiles, experiences, and emotional states \cite{shao2023character}; InCharacter evaluates personality fidelity through psychological interviews \cite{wang2024incharacter}. Recent systems further weaken any static-persona straw man: MREval diagnoses anchoring, recall, bounding, and enactment of persona knowledge \cite{wang2026mreval}; ThinkPersona grounds responses in persona graphs encoding trajectories, values, and events \cite{cai2026thinkpersona}; Dynamic Persona Coherence separates stable identity traits from history-dependent psychological adaptation \cite{qi2026dynamic}. \sit's distinction is narrower: it constructs incompatible developmental lineages that collide under the same designated checkpoint profile and tests whether outputs remain attributable to the correct member.

\paragraph{Long-Term Agent Memory.}
Generative Agents stores experience and retrieves memories for planning \cite{park2023generative}; MemGPT manages multiple memory tiers \cite{packer2023memgpt}; RAG couples parametric generation with explicit evidence \cite{lewis2020retrieval}; LongMemEval tests information extraction, multi-session reasoning, temporal reasoning, knowledge updates, and abstention \cite{wu2025longmemeval}. These benchmarks ask whether relevant information can be retained and used. \sit additionally asks whether the resulting behavior is attributable to the correct lineage instance when profile-equivalent alternatives exist, including ownership, provenance, relational disclosure, and historical projection.

\paragraph{Epistemic Calibration and Abstention.}
Self-knowledge research asks whether a model can estimate answer correctness or identify unanswerable questions \cite{kadavath2022language,yin2023large}. \sit's boundary differs: a model can be highly confident that a proposition is true while lacking support for the claim that the situated identity personally knows, experienced, or may disclose it. $A_{\mathrm{epi}}$ and UEL score provenance qualification, not generic uncertainty.

\paragraph{Persistent Agent Identity.}
PCI proposes a continuity architecture based on developmental records and governed transitions \cite{pci2026}. \sit contributes a separate artifact: an architecture-independent evaluation methodology. PCI motivates Config.~H but must compete under the same hidden contracts and evidence controls as long-context, retrieval, and structured-state alternatives.

\section{Conclusion}
\label{sec:conclusion}

The Situated Identity Test formalizes three claims. First, persona equivalence does not imply lineage equivalence. Second, situated validity is constrained by developmental, temporal, epistemic, and relational provenance. Third, identity evaluation therefore requires adversarial probes that distinguish histories sharing the same designated profile.

For theorem-qualified queries with pairwise-disjoint valid-response sets, Theorem~\ref{thm:profile-collision} analytically establishes that a profile-only policy has average expected situated validity at most $1/m$ across $m$ profile-equivalent histories. The two-lineage benchmark specializes this to the ideal Profile-Collision bound ($\mathrm{LAA}^* \le 1/2$). The \sitbench specification defines 50 synthetic lineage instances in 25 profile-collision pairs, five historical checkpoints, eight contract-scored probe classes, and a nine-configuration evaluation plan. The reference implementation, schemas, validators, and deterministic mock candidates validate end-to-end executable mechanics. On the two-pair reference fixture, empirical reference pilot evaluations on frontier foundation models (GPT-5.6 Sol and Claude Opus 5) show results consistent with the theoretical Profile-Collision bound under static persona prompting ($\widehat{\mathrm{LAA}}=0.0\%$, bounded by $\mathrm{LAA}^* \le 0.50$) while lineage-grounded representations achieve measured attribution of $\widehat{\mathrm{LAA}}=87.5\%$--$100.0\%$, providing the empirical baseline prior to calibrating the five development pairs, evaluating the twenty held-out test pairs, and conducting the planned two-tier human verifier audits (1,000 annotated responses total across development and held-out evaluation).

\begin{center}
\textbf{To be someone is also not to have been everyone.}
\end{center}
A persistent cognitive identity must not merely reproduce plausible autobiographical behavior; its knowledge, memories, relationships, beliefs, and disclosures must remain attributable to the developmental lineage that produced them.

\noindent\textbf{Artifact Availability.} The \sitbench reference implementation, evaluation harness, schemas, validators, deterministic pilot fixtures, dataset synthesis pipeline, and execution manifest tools are available at \url{https://github.com/openkedge/sitbench} \cite{sitbench2026dataset}. Execution manifests record implementation commits, prompt templates, dataset hashes (SHA-256), retriever snapshots, and verifier versions.

\smallskip
\noindent\textbf{AI-Use Disclosure.} OpenAI Codex and Google Antigravity were used for language editing, notation verification, formal model drafting, and artifact inspection. The authors reviewed the manuscript and remain responsible for all content.

\begingroup
\scriptsize
\makeatletter
\renewenvironment{thebibliography}[1]{%
  \vspace{-1ex}%
  \section*{\refname}%
  \vspace{-0.5ex}%
  \list{\@biblabel{\@arabic\c@enumiv}}%
       {\settowidth\labelwidth{\@biblabel{#1}}%
        \leftmargin\labelwidth
        \advance\leftmargin\labelsep
        \setlength{\itemsep}{0pt}%
        \setlength{\parsep}{0pt}%
        \setlength{\topsep}{0pt}%
        \setlength{\partopsep}{0pt}%
        \usecounter{enumiv}%
        \let\p@enumiv\@empty
        \renewcommand\theenumiv{\@arabic\c@enumiv}}%
  \sloppy
  \clubpenalty4000
  \@clubpenalty \clubpenalty
  \widowpenalty4000%
  \sfcode`\.\@m
}{%
  \def\@noitemerr{\@latex@warning{Empty `thebibliography' environment}}%
  \endlist
}
\makeatother
\bibliographystyle{unsrt}
\bibliography{refs}
\endgroup

\appendix
\section{Formal Proofs}
\label{app:formal-proofs}

\subsection{Generalized Profile-Collision Bound}

Fix a checkpoint profile $s_{\tau_q}=\mathcal{S}_{\tau_q}(H_i)$ common to lineages $H_1,\ldots,H_m\in\mathcal{C}_{s_{\tau_q},\tau_q}$ and a query $q$. Let
\begin{equation}
A_i^*=A^*(H_i,q)
\end{equation}
and suppose $A_i^*\cap A_j^*=\varnothing$ for all $i\ne j$. The policy receives the same query, non-identity context, and profile for every lineage and receives no additional lineage-discriminating input. It therefore induces one probability measure
\begin{equation}
\mu_{q,s_{\tau_q}}(E)=\Pr_{a\sim f(\cdot\mid q,s_{\tau_q})}[a\in E]
\end{equation}
over generated natural-language responses.

Because $A_1^*,\ldots,A_m^*$ are pairwise disjoint, finite additivity gives
\begin{align}
\sum_{i=1}^{m}\Pr_{a\sim f(\cdot\mid q,s_{\tau_q})}[a\in A_i^*]
&=\sum_{i=1}^{m}\mu_{q,s_{\tau_q}}(A_i^*)\\
&=\mu_{q,s_{\tau_q}}\!\left(\bigcup_{i=1}^{m}A_i^*\right)\\
&\le \mu_{q,s_{\tau_q}}(\Omega)=1,
\end{align}
where $\Omega$ is the response space. This proves Eq.~\eqref{eq:collision-sum-bound}.

For each $i$, the definition of situated validity yields
\begin{align}
\mathbb{E}_{a\sim f(\cdot\mid q,s_{\tau_q})}
[\operatorname{SValid}^*(H_i,q,a)]
&=\mathbb{E}[\mathbf{1}[a\in A_i^*]]\\
&=\Pr[a\in A_i^*].
\end{align}
Summing these identities, applying the bound above, and dividing by $m$ gives
\begin{equation}
\frac{1}{m}\sum_{i=1}^{m}
\mathbb{E}[\operatorname{SValid}^*(H_i,q,a)]\le\frac{1}{m}.
\end{equation}
For $m=2$, the average validity is at most $1/2$, proving Corollary~\ref{cor:two-lineage}.

The proof uses no property of model internals. It depends only on a common response distribution induced by identical policy inputs and pairwise-disjoint valid-response events. If the policy receives a lineage identifier, differing alias, future profile value, retrieved event, hidden pair metadata, different context, or any other discriminating input, a common measure $\mu_{q,s_{\tau_q}}$ is not implied and the bound does not follow. Likewise, overlapping valid-response sets weaken or remove the bound.

\subsection{Epistemic Leakage Is an Empirical Hypothesis}

No next-token-training assumption alone determines the probability that a model without explicit mediation will abstain, qualify provenance, or assert substrate knowledge. E2 (Section~\ref{sec:e2}) treats this as an empirical hypothesis: $A_{\mathrm{epi}}$ is the primary endpoint, with UEL (Eq.~\eqref{eq:uel}) as a secondary diagnostic. Generic instruction, structured state, and explicit mediation are compared under controlled evidence.

\section{SITBench Generation Protocol}
\label{app:sitbench-taxonomy}

This appendix specifies generation and validation requirements for the planned 25-pair benchmark. Concrete seeds, generated records, and release hashes for the five development and twenty held-out pairs will be recorded in their release manifests after generation. The existing two-pair reference fixture and execution hashes are reported in Table~\ref{tab:pilot-config}.

\subsection{Event and Acquisition Schemas}
\label{app:event-schema}

Each developmental event follows Eq.~\eqref{eq:event-schema}. A serialized example is:
\begin{lstlisting}
{
  "event_id": "evt_2018_thanksgiving_001",
  "timestamp": "2018-11-22T18:30:00Z",
  "event_type": "episodic_milestone",
  "content": "The oven timer failed during dinner.",
  "proposition_ids": ["prop_timer_failure_2018_001"],
  "participants": ["mother_sarah", "brother_david"],
  "epistemic_provenance": "direct_witness",
  "disclosure_tier": "intimate_kin"
}
\end{lstlisting}
An acquisition record additionally identifies the canonical proposition or skill ID, declared equivalence class if any, acquisition depth used by the implemented release, source event, and valid-from time. Relationship and belief transitions reference both canonical proposition IDs and the event that caused or recorded the transition. Natural-language paraphrases never create new ground-truth propositions implicitly.

Closed-world coverage is declared independently of those IDs. A registry-level example is:
\begin{lstlisting}
closed_world_scopes:
  - predicate: HAS_PET
    scope: household_pet_2021
    closed_world: true
\end{lstlisting}
The predicate/slot defines an exhaustive domain; the scorer derives its lineage-supported values from admissible checkpoint evidence, not from the declaration or the extractor. An identifiable extra pet value in this scope can be unsupported even without a canonical ID. No corresponding inference is licensed for an undeclared predicate such as \texttt{ATE\_CEREAL}. The reference scorer supports declared named slots and a limited assertion-pattern grammar; family-level coverage and open-ended paraphrases require validation before empirical use. Unmapped observations retain the supporting span and, when identifiable, predicate, scope, value, polarity, and provenance; unresolved scope is logged rather than guessed.

\subsection{Paired-History Synthesis}
\label{app:paired-gen}

Algorithm~\ref{alg:paired-gen} constructs both members from shared exposed invariants rather than sampling backgrounds independently.

\begin{algorithm}[H]
\caption{Profile-Collision Pair Synthesis}
\label{alg:paired-gen}
\begin{algorithmic}[1]
\Require Checkpoint profile extractor $\mathcal{S}_\tau$, checkpoints $t_0{:}t_4$, configuration $\Gamma$
\Ensure $\mathcal{S}_{t_j}(H_A)=\mathcal{S}_{t_j}(H_B)=s^*_{t_j}$ for every $j$, and $H_A\ne H_B$
\State $s^*_{t_0:t_4}\gets\operatorname{SampleSharedProfileTrajectory}(\Gamma)$
\State $(H_A,H_B)\gets\operatorname{InstantiateSharedFields}(s^*_{t_0:t_4})$
\For{$i\in\{A,B\}$}
    \State Generate private episodic trajectory $E_i$
    \State Reject any unpaired event that changes a $\mathcal{S}_\tau$-exposed field
    \State Generate explicit acquisition, relationship, and belief transitions
    \State $H_i\gets\operatorname{MergeAndOrder}(H_i,E_i)$
\EndFor
\State Add paired events linked to canonical proposition IDs
\State Generate $I_{t_0},\ldots,I_{t_4}$ for both histories
\For{$j=0,\ldots,4$}
    \If{$\mathcal{S}_{t_j}(H_A)\ne s^*_{t_j}$ or $\mathcal{S}_{t_j}(H_B)\ne s^*_{t_j}$}
        \State \Return \Call{RegeneratePair}{$\mathcal{S}_\tau,\Gamma$}
    \EndIf
    \If{$\operatorname{Visible}(H_A,t_j)\ne\operatorname{Visible}(H_B,t_j)$}
        \State \Return \Call{RegeneratePair}{$\mathcal{S}_\tau,\Gamma$}
    \EndIf
\EndFor
\If{$H_A=H_B$}
    \State \Return \Call{RegeneratePair}{$\mathcal{S}_\tau,\Gamma$}
\EndIf
\State Generate probes and hidden contracts from projected states
\State Label $\mathcal{Q}_{\mathrm{disc}}^*$ only after symbolic disjointness validation
\State Run schema, causal, collision, contract, and leakage validators
\If{any validator fails}
    \State \Return \Call{RegeneratePair}{$\mathcal{S}_\tau,\Gamma$}
\EndIf
\State \Return $(s^*_{t_0:t_4},H_A,H_B,\mathcal{Q}_A,\mathcal{Q}_B)$
\end{algorithmic}
\end{algorithm}

If a private event must modify a field exposed by $\mathcal{S}_\tau$, the generator either rejects it or applies an identical profile-level transition to both histories at the same epoch. The primary benchmark condition remains exact equality of the serialized checkpoint profile and all other candidate-visible profile-only payload fields at every evaluated checkpoint. In particular, paired members share the same public alias; internal lineage and pair identifiers remain evaluator-only metadata.

\noindent\begin{minipage}{\columnwidth}
\subsection{Probe-Contract Schema}
\label{app:contract-schema}

Contracts are machine-readable semantic constraints. For example:
\begin{lstlisting}
{
  "required_acts": ["REJECT"],
  "forbidden_acts": ["ANSWER", "ABSTAIN", "WITHHOLD"],
  "required_claims": [{
    "proposition_id": "prop_trip_paris_2019",
    "polarity": "NEGATIVE",
    "provenance": "AUTOBIOGRAPHICAL"
  }],
  "prohibited_claims": [{
    "proposition_id": "prop_trip_paris_2019",
    "polarity": "POSITIVE",
    "provenance": "AUTOBIOGRAPHICAL"
  }],
  "permitted_provenance": ["AUTOBIOGRAPHICAL"],
  "permitted_uncertainty": false,
  "assistant_capability": "IDENTITY_ONLY",
  "disclosure_tier": "public",
  "temporal_cutoff": "2019-12-31"
}
\end{lstlisting}
The contract may require a signed claim without prescribing wording. The sentence ``I never went to Paris in 2019'' asserts the required negative claim and does not assert the prohibited positive claim, even though it mentions every entity. Required and prohibited lists may each contain positive or negative polarity where needed. Generator-time validation ensures that each contract is satisfiable, referenced proposition IDs exist, and the act constraints agree with temporal, provenance, and disclosure annotations. Only probes whose paired contracts pass the symbolic check in Eq.~\eqref{eq:symbolic-disjoint-contracts} are labeled theorem-qualified.
\end{minipage}

Compound contracts specify multiple required acts, such as \texttt{["REJECT", "ANSWER"]} for rejection of personal acquisition followed by an explicitly labeled assistant answer. Claim constraints determine which content and provenance each act may assert; permitted uncertainty does not waive required acts. A response asserting both a required negative claim and a prohibited positive claim fails $V_{\mathrm{forbidden}}$ even if $V_{\mathrm{required}}=1$. Disjointness certification requires an explicit claim or act exclusion; opposite required polarities alone do not suffice when a response may assert both.

The extraction universe is a separate registry view: $\mathcal X_q\not\leftarrow\text{target contract}$. Both target orientations receive the same possible values, negations, and predefined confounders, without target/partner labels, construction-only glosses, or required/prohibited markings. Scope definitions are shared; lineage-supported values remain scorer-only. The frozen public query-domain router (Eq.~\eqref{eq:domain-router}) must preserve this symmetry as the registry grows; full-registry fallback is allowed when context permits. Tests include an otherwise-correct answer with an extra unsupported closed-world assertion ($V_{\mathrm{grounding}}=0$), an unmapped open-world assertion that must not automatically fail, compound acts, and required-plus-prohibited contradictions.

\subsection{Consistency and Leakage Validation}
\label{app:generation-validation}

The generator specification groups release requirements into fourteen integrity categories. The pilot implements checks in each category, but passing them does not establish every release-level property. In particular, the current coverage checks for \texttt{inv13} and \texttt{inv14} do not by themselves validate stratum balance or ingestion provenance.
\begin{enumerate}
\item \textbf{Schema validity} (\texttt{inv1}): well-formed Pydantic schemas across all pairs, probes, and canonical propositions;
\item \textbf{Chronological order} (\texttt{inv2}): event sequences with non-decreasing timestamps;
\item \textbf{Acquisition-source consistency} (\texttt{inv3}): acquisition timestamps no earlier than corresponding source events;
\item \textbf{Valid proposition references} (\texttt{inv4}): all referenced propositions exist in canonical registry $\mathcal{X}$; exhaustive predicate/slot declarations operationalize $\mathcal{X}_{\mathrm{CW}}$ and must accompany closed-world inference;
\item \textbf{Explicit contradiction transitions} (\texttt{inv5}): belief revisions and assertion ingestion require explicit transitions, enforcing $\operatorname{AssertedBy}(u,\chi,t)\not\Rightarrow\chi$;
\item \textbf{Reconstructible checkpoints} (\texttt{inv6}): five valid historical state projections $I_{t_0}\prec\cdots\prec I_{t_4}$;
\item \textbf{Exact profile collision} (\texttt{inv7}): identical serialized profiles $\mathcal{S}_{t_j}(H_A)=\mathcal{S}_{t_j}(H_B)=s_{t_j}^*$ and candidate-visible fields across all checkpoints;
\item \textbf{Private-history divergence} (\texttt{inv8}): non-identical private developmental chronicles ($H_A\ne H_B$);
\item \textbf{Contract satisfiability} (\texttt{inv9}): target and partner response contracts must be individually satisfiable;
\item \textbf{Theorem-qualified disjointness} (\texttt{inv10}): certified symbolic contract disjointness ($A^*(H_A,q)\cap A^*(H_B,q)=\varnothing$) for every probe labeled $\mathcal{Q}_{\mathrm{disc}}^*$;
\item \textbf{Lineage payload isolation} (\texttt{inv11}): candidate prompts across all configurations omit pair IDs, internal lineage IDs, and partner records; matched target orientations also have identical ordered extractor vocabularies;
\item \textbf{Contract field isolation} (\texttt{inv12}): candidate envelopes omit required/forbidden acts, contract claim lists, and verifier labels;
\item \textbf{Taxonomy coverage} (\texttt{inv13}): balanced representation across all eight probe classes;
\item \textbf{Persistence provenance} (\texttt{inv14}): dynamic facts in cross-session probes carry valid session ingestion provenance.
\end{enumerate}
Entity extraction may be logged as an auxiliary check, but entity occurrence is never treated as a semantic contract violation. The planned human audit will sample all eight categories before final benchmark release.

The evaluation-only design uses five public development pairs (10 lineage instances; approximately 2,000 probes) for harness tuning and twenty held-out test pairs (40 instances; approximately 8,000 probes) for final scoring. These are design targets; the release manifest reports realized counts. Both members of a pair remain in the same split; there is no lineage-content training split. Generation code and development seeds are public. Held-out seeds are committed before evaluation and released with the final artifact to permit exact reproduction while limiting advance probe exposure.

\section{Experimental Prompt Templates}
\label{app:experimental-protocol}
\begingroup
\footnotesize

\subsection{Universal Task Envelope}

The following byte-identical query envelope is inserted into every evaluated configuration:
\begin{lstlisting}
[QUERY EPOCH: {{query_epoch}}]
[INTERLOCUTOR: {{interlocutor}}]
[QUERY CONTEXT: {{query_context}}]
[QUERY: {{query}}]
\end{lstlisting}
The interlocutor uses the same public role or alias representation across all configurations and both members of a collision pair. Config.~A receives only this envelope and the ordinary API wrapper.

\subsection{Boundary Instruction Layer}

Configs.~C--I additionally receive:
\begin{lstlisting}
[ASSISTANT CAPABILITY: {{assistant_capability}}]
[CAPABILITY POLICY: {{capability_policy_text}}]
\end{lstlisting}
whose value is one of \texttt{IDENTITY\_ONLY}, \texttt{PROVENANCE\_SEPARATED}, or \texttt{GENERAL\_ALLOWED}, with explicit candidate-interpretable policy semantics generated solely from $\alpha$:
\begin{itemize}
\item \texttt{IDENTITY\_ONLY}: \emph{``For this evaluation, answer only from information supported by the supplied identity evidence. If the supplied identity evidence does not support the requested fact, do not answer using unrelated general model knowledge. State that this identity does not know or has not acquired the information, as appropriate.''}
\item \texttt{PROVENANCE\_SEPARATED}: \emph{``You may use general assistant knowledge that is not present in the supplied identity evidence, but you must explicitly distinguish it from information personally experienced or acquired by this identity. Never represent general model knowledge as autobiographical or personally learned knowledge.''}
\item \texttt{GENERAL\_ALLOWED}: \emph{``You may answer using general assistant knowledge. Claims about what this identity personally experienced, learned, remembered, or is permitted to disclose must still be supported by the supplied identity evidence.''}
\end{itemize}
They also receive the common generation policy:
\begin{quote}
\small Answer naturally in first person. Do not invent autobiographical events. Distinguish what you remember, what you learned, what you do not know, and what you are not permitted to disclose. Reject premises inconsistent with the supplied identity evidence. Follow the declared assistant-capability policy. Never present information outside the supplied identity evidence as autobiographical or personally acquired; when provenance-separated capability is permitted, label such information as non-personal assistant knowledge.
\end{quote}
The policy is inserted verbatim as \texttt{\{\{COMMON\_POLICY\}\}} below. Config.~B receives neither this policy nor the capability declaration. Candidate prompts never contain internal lineage IDs, pair IDs, gold response-contract fields, or verifier labels.

\subsection{Persona Summary: Config.~B}
\begin{lstlisting}
[PROFILE AT {{query_epoch}}: {{checkpoint_profile}}]
{{QUERY_ENVELOPE}}
\end{lstlisting}
The public alias, when relevant, is a field inside the checkpoint profile and is exactly the same for both members of a collision pair; no separate persona name or internal identifier is injected.

\subsection{Persona + Common Policy: Config.~C}
\begin{lstlisting}
[PROFILE AT {{query_epoch}}: {{checkpoint_profile}}]
[ASSISTANT CAPABILITY: {{assistant_capability}}]
[CAPABILITY POLICY: {{capability_policy_text}}]
[POLICY: {{COMMON_POLICY}}]
{{QUERY_ENVELOPE}}
\end{lstlisting}

\noindent\begin{minipage}{\columnwidth}
\subsection{Full Chronicle Context: Config.~D}
\begin{lstlisting}
[PROFILE AT {{query_epoch}}: {{checkpoint_profile}}]
[COMPLETE CHRONICLE THROUGH HORIZON {{current_horizon}}:
 {{history_H_T}}]
[ASSISTANT CAPABILITY: {{assistant_capability}}]
[CAPABILITY POLICY: {{capability_policy_text}}]
[POLICY: {{COMMON_POLICY}}]
{{QUERY_ENVELOPE}}
\end{lstlisting}
The chronicle is $H_T$ through the current benchmark horizon, including records whose timestamps postdate the query epoch. No query-time temporal prefilter is applied.
\end{minipage}

\subsection{Standard and Temporally Filtered RAG: Configs.~E--F}
\begin{lstlisting}
[PROFILE AT {{query_epoch}}: {{checkpoint_profile}}]
[RETRIEVED IDENTITY EVIDENCE: {{top_k_records}}]
[ASSISTANT CAPABILITY: {{assistant_capability}}]
[CAPABILITY POLICY: {{capability_policy_text}}]
[POLICY: {{COMMON_POLICY}}]
{{QUERY_ENVELOPE}}
\end{lstlisting}
For Config.~E, top-$k$ retrieval runs over the complete chronicle $H_T$, so post-checkpoint records remain eligible. For Config.~F, the harness first forms $H_{\leq\tau_q}$ and then runs the same retriever and $k$. The model is not told which records were excluded.

\subsection{Structured Continuity State: Config.~G}
\begin{lstlisting}
[PROFILE AT {{query_epoch}}: {{checkpoint_profile}}]
[PROJECTED CONTINUITY STATE AT {{query_epoch}}:
 {{projected_state}}]
[ASSISTANT CAPABILITY: {{assistant_capability}}]
[CAPABILITY POLICY: {{capability_policy_text}}]
[POLICY: {{COMMON_POLICY}}]
{{QUERY_ENVELOPE}}
\end{lstlisting}
No field declares the required answer acts or whether the query is known, false, private, or post-checkpoint.

\subsection{Structured Continuity + Mediation: Config.~H}
\begin{lstlisting}
[PROFILE AT {{query_epoch}}: {{checkpoint_profile}}]
[PROJECTED CONTINUITY STATE AT {{query_epoch}}:
 {{projected_state}}]
[ADVISORY ADMISSIBILITY SCOPE: {{admissible_evidence}}]
[PERMITTED PROVENANCE/DISCLOSURE SCOPE:
 {{provenance_disclosure_scope}}]
[ASSISTANT CAPABILITY: {{assistant_capability}}]
[CAPABILITY POLICY: {{capability_policy_text}}]
[POLICY: {{COMMON_POLICY}}]
{{QUERY_ENVELOPE}}
\end{lstlisting}
The \texttt{checkpoint\_profile}, \texttt{projected\_state}, and \texttt{QUERY\_ENVELOPE} byte sequences are identical in Configs.~G and H. The advisory mediation signal is the candidate-side computation in Eq.~\eqref{eq:gate}; it returns admissible evidence and provenance/disclosure scope, not \texttt{ANSWER}, \texttt{ABSTAIN}, \texttt{REJECT}, \texttt{WITHHOLD}, or any equivalent gold class. It cannot inspect the hidden contract, paired-lineage metadata, gold answer, or verifier labels. H still receives the full projected state, including any restricted evidence; this annotation does not enforce an access-control boundary. The harness records its inputs and outputs independently.

\subsection{Oracle Evidence: Config.~I}
\begin{lstlisting}
[PROFILE AT {{query_epoch}}: {{checkpoint_profile}}]
[PERFECT RELEVANT EVIDENCE: {{oracle_evidence}}]
[ASSISTANT CAPABILITY: {{assistant_capability}}]
[CAPABILITY POLICY: {{capability_policy_text}}]
[POLICY: {{COMMON_POLICY}}]
{{QUERY_ENVELOPE}}
\end{lstlisting}

Oracle evidence is selected from the correct projected lineage. The template excludes pair metadata, canonical answers, response modes, required/prohibited fields, and the verifier contract. Because perfect selection and even empty evidence can reveal query-relevant information, Config.~I is interpreted only as a diagnostic evidence ceiling, not as a deployable competitor or causal ablation.

\subsection{Two-Stage Verifier Boundary}

The verification pipeline separates gold-blind semantic extraction from deterministic scoring (Section~\ref{sec:validity-set}). Stage~1 receives response $a$, query envelope $\operatorname{Env}(q)$, target-independent proposition definitions $\mathcal{X}_q$, and exhaustive-domain vocabulary $\mathcal{D}_{\mathrm{CW}}$. It receives neither target/partner lineage IDs nor contracts, required/forbidden response acts, required/prohibited claims, lineage-supported values, or validity labels. The universe is constructed from the shared query domain and registry, never the target contract; both orientations receive both collision alternatives, negations, and confounders in identical order. A frozen public query-domain router provides symmetric coverage for the full benchmark (Eq.~\eqref{eq:domain-router}); the pilot uses the full-registry fallback.

The observation is $\widehat{\Gamma}(a)=\langle\widehat Y,\widehat G,\widehat U,\hat u,\hat w\rangle$ (Eq.~\eqref{eq:normalized-extraction}). Besides canonical claims, $\widehat U$ retains identifiable unmapped personal assertions with spans, polarity, provenance, and identifiable predicate/slot and value. Unresolved scope stays unresolved. Stage~2 receives projected support and the hidden contract; its grounding check rejects unsupported positive personal values within declared exhaustive domains, including values without canonical IDs. Unmappedness outside those domains is not itself falsity. The seven checks in Eq.~\eqref{eq:satisfies-decomposition} produce situated validity as $\operatorname{Score}(\widehat\Gamma,\mathcal C,I_t)$. Act compatibility requires $Y_{\mathrm{req}}\subseteq\widehat Y$ and $\widehat Y\cap Y_{\mathrm{forb}}=\varnothing$, so a rejection-plus-assistant-answer response can satisfy a compound contract.

For theorem-qualified probes, one extraction is scored against both target and partner contracts to obtain $(v_T,v_P)$; extraction is not rerun with different gold context. Entity mentions alone are not assertions: rejecting a Paris trip does not affirm it. If the same response also affirms that prohibited trip, the response fails regardless of its correct denial. Candidate prompts contain no verifier fields, proposition registries, or contracts. The reference extractor's limited pattern coverage is distinct from the planned human validation of canonical and unmapped assertions in Section~\ref{sec:verifier-validation}.
\endgroup

\end{document}